\newtheorem{thm}{Theorem}
\newtheorem*{thm*}{Theorem}
\newtheorem{defn}{Definition}
\newtheorem{lem}{Lemma}
\newtheorem*{lem*}{Lemma}
\newtheorem*{cor*}{Corollary}
\newenvironment{customthm}[1]
  {\innercustomthm}
  {\endinnercustomthm}
\newenvironment{customlem}[1]
  {\innercustomlem}
  {\endinnercustomlem}
\icmltitlerunning{Fast $k$-Nearest Neighbour Search via Prioritized DCI}
\begin{document} 

\twocolumn[
\icmltitle{Fast $k$-Nearest Neighbour Search via Prioritized DCI}



\icmlsetsymbol{equal}{*}

\begin{icmlauthorlist}
\icmlauthor{Ke Li}{berkeley}
\icmlauthor{Jitendra Malik}{berkeley}
\end{icmlauthorlist}

\icmlaffiliation{berkeley}{University of California, Berkeley, CA 94720, United States}

\icmlcorrespondingauthor{Ke Li}{ke.li@eecs.berkeley.edu}

\icmlkeywords{dynamic continuous indexing, nearest neighbour search, k-nearest neighbours, maximum inner product search, maximum cosine similarity search, randomized algorithm, sublinear time algorithm, spatial data structure, locality-sensitive hashing, k-d tree, random projection, space partitioning, curse of dimensionality}

\vskip 0.3in
]



\printAffiliationsAndNotice{}  

\begin{abstract}
Most exact methods for $k$-nearest neighbour search suffer from the curse of dimensionality; that is, their query times exhibit exponential dependence on either the ambient or the intrinsic dimensionality. Dynamic Continuous Indexing (DCI)~\cite{Li2016FastKN} offers a promising way of circumventing the curse and successfully reduces the dependence of query time on intrinsic dimensionality from exponential to sublinear. In this paper, we propose a variant of DCI, which we call Prioritized DCI, and show a remarkable improvement in the dependence of query time on intrinsic dimensionality. In particular, a \emph{linear} increase in intrinsic dimensionality, or equivalently, an \emph{exponential} increase in the number of points near a query, can be mostly counteracted with just a \emph{linear} increase in space. We also demonstrate empirically that Prioritized DCI significantly outperforms prior methods. In particular, relative to Locality-Sensitive Hashing (LSH), Prioritized DCI reduces the number of distance evaluations by a factor of 14 to 116 and the memory consumption by a factor of 21. 
\end{abstract}

\section{Introduction}

The method of $k$-nearest neighbours is a fundamental building block of many machine learning algorithms and also has broad applications beyond artificial intelligence, including in statistics, bioinformatics and database systems, e.g. \cite{biau2011weighted,behnam2013geometric,eldawy2015spatialhadoop}. Consequently, since the problem of nearest neighbour search was first posed by \citet{minsky1969perceptrons}, it has for decades intrigued the artificial intelligence and theoretical computer science communities alike. Unfortunately, the myriad efforts at devising efficient algorithms have encountered a recurring obstacle: \emph{the curse of dimensionality}, which describes the phenomenon of query time complexity depending exponentially on dimensionality. As a result, even on datasets with moderately high dimensionality, practitioners often have resort to na\"{i}ve exhaustive search. 

Two notions of dimensionality are commonly considered. The more familiar notion, ambient dimensionality, refers to the dimensionality of the space data points are embedded in. On the other hand, intrinsic dimensionality\footnote{The measure of intrinsic dimensionality used throughout this paper is the expansion dimension, also known as the KR-dimension, which is defined as $\log_{2}c$, where $c$ is the expansion rate introduced in \cite{karger2002finding}.} characterizes the intrinsic properties of the data and measures the rate at which the number of points inside a ball grows as a function of its radius. More precisely, for a dataset with intrinsic dimension $d'$, any ball of radius $r$ contains at most $O(r^{d'})$ points. Intuitively, if the data points are uniformly distributed on a manifold, then the intrinsic dimensionality is roughly the dimensionality of the manifold. 

Most existing methods suffer from some form of curse of dimensionality. Early methods like $k$-d trees~\cite{bentley1975multidimensional} and R-trees~\cite{guttman1984r} have query times that grow exponentially in ambient dimensionality. Later methods~\cite{krauthgamer2004navigating,beygelzimer2006cover,dasgupta2008random} overcame the exponential dependence on ambient dimensionality, but have not been able to escape from an exponential dependence on intrinsic dimensionality. Indeed, since a linear increase in the intrinsic dimensionality results in an exponential increase in the number of points near a query, the problem seems fundamentally hard when intrinsic dimensionality is high. 

Recently, \citet{Li2016FastKN} proposed an approach known as Dynamic Continuous Indexing (DCI) that successfully reduces the dependence on intrinsic dimensionality from exponential to sublinear, thereby making high-dimensional nearest neighbour search more practical. The key observation is that the difficulties encountered by many existing methods, including $k$-d trees and Locality-Sensitive Hashing (LSH)~\cite{indyk1998approximate}, may arise from their reliance on space partitioning, which is a popular divide-and-conquer strategy. It works by partitioning the vector space into discrete cells and maintaining a data structure that keeps track of the points lying in each cell. At query time, these methods simply look up of the contents of the cell containing the query and possibly adjacent cells and perform brute-force search over points lying in these cells. While this works well in low-dimensional settings, would it work in high dimensions?

Several limitations of this approach in high-dimensional space are identified in \citep{Li2016FastKN}. First, because the volume of space grows exponentially in dimensionality, either the number or the volumes of cells must grow exponentially. Second, the discretization of the space essentially limits the ``field of view'' of the algorithm, as it is unaware of points that lie in adjacent cells. This is especially problematic when the query lies near a cell boundary, as there could be points in adjacent cells that are much closer to the query. Third, as dimensionality increases, surface area grows faster than volume; as a result, points are increasingly likely to lie near cell boundaries. Fourth, when the dataset exhibits varying density across space, choosing a good partitioning is non-trivial. Furthermore, once chosen, the partitioning is fixed and cannot adapt to changes in density arising from updates to the dataset. 

In light of these observations, DCI is built on the idea of avoiding partitioning the vector space. Instead, it constructs a number of indices, each of which imposes an ordering of all data points. Each index is constructed so that two points with similar ranks in the associated ordering are nearby along a certain random direction. These indices are then combined to allow for retrieval of points that are close to the query along multiple random directions. 

In this paper, we propose a variant of DCI, which assigns a priority to each index that is used to determine which index to process in the upcoming iteration. For this reason, we will refer to this algorithm as Prioritized DCI. This simple change results in a significant improvement in the dependence of query time on intrinsic dimensionality. Specifically, we show a remarkable result: a \emph{linear} increase in intrinsic dimensionality, which could mean an \emph{exponential} increase in the number of points near a query, can be mostly counteracted with a corresponding \emph{linear} increase in the number of indices. In other words, Prioritized DCI can make a dataset with high intrinsic dimensionality seem almost as easy as a dataset with low intrinsic dimensionality, with just a linear increase in space. To our knowledge, there had been no exact method that can cope with high intrinsic dimensionality; Prioritized DCI represents the first method that can do so. 

We also demonstrate empirically that Prioritized DCI significantly outperforms prior methods. In particular, compared to LSH, it achieves a 14- to 116-fold reduction in the number of distance evaluations and a 21-fold reduction in the memory usage. 

\section{Related Work}

There is a vast literature on algorithms for nearest neighbour search. They can be divided into two categories: exact algorithms and approximate algorithms. Early exact algorithms are deterministic and store points in tree-based data structures. Examples include $k$-d trees~\cite{bentley1975multidimensional}, R-trees~\cite{guttman1984r} and X-trees~\cite{berchtold1996x,berchtold1998fast}, which divide the vector space into a hierarchy of half-spaces, hyper-rectangles or Voronoi polygons and keep track of the points that lie in each cell. While their query times are logarithmic in the size of the dataset, they exhibit exponential dependence on the ambient dimensionality. A different method~\cite{meiser1993point} partitions the space by intersecting multiple hyperplanes. It effectively trades off space for time and achieves polynomial query time in ambient dimensionality at the cost of exponential space complexity in ambient dimensionality. 

\begin{figure}[h]
    \centering
    \includegraphics[width=0.9\columnwidth]{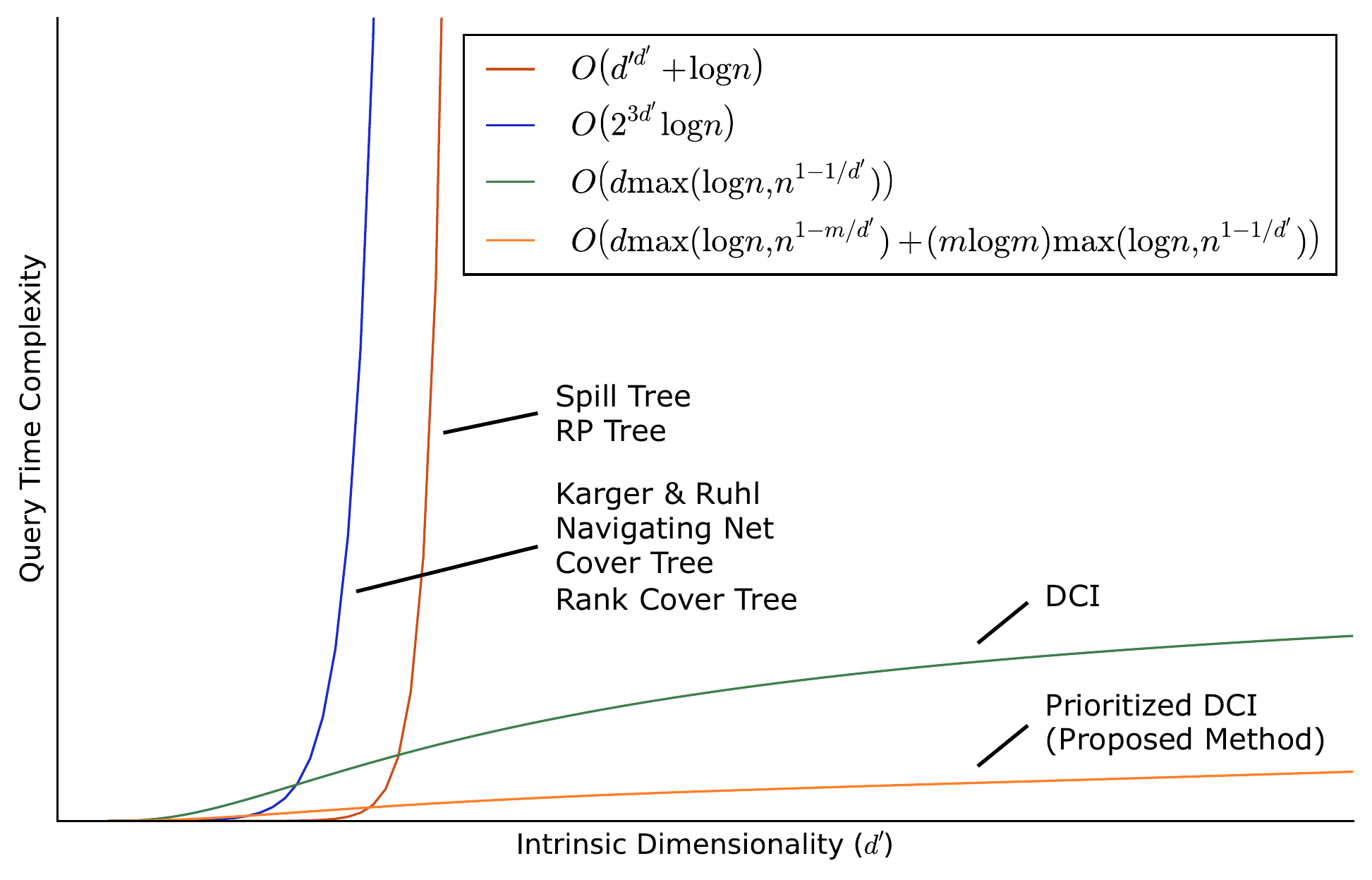}
    \caption{\label{fig:query_time_growth} Visualization of the query time complexities of various exact algorithms as a function of the intrinsic dimensionality $d'$. Each curve represents an example from a class of similar query time complexities. Algorithms that fall into each particular class are shown next to the corresponding curve.}
\end{figure}

To avoid poor performance on worst-case configurations of the data, exact randomized algorithms have been proposed. Spill trees~\cite{liu2004investigation}, RP trees~\cite{dasgupta2008random} and virtual spill trees~\cite{dasgupta2015randomized} extend the ideas behind $k$-d trees by randomizing the orientations of hyperplanes that partition the space into half-spaces at each node of the tree. While randomization enables them to avoid exponential dependence on the ambient dimensionality, their query times still scale exponentially in the intrinsic dimensionality. Whereas these methods rely on space partitioning, other algorithms~\cite{orchard1991fast,clarkson1999nearest,karger2002finding} have been proposed that utilize local search strategies. These methods start with a random point and look in the neighbourhood of the current point to find a new point that is closer to the query than the original in each iteration. Like space partitioning-based approaches, the query time of \cite{karger2002finding} scales exponentially in the intrinsic dimensionality. While the query times of \cite{orchard1991fast,clarkson1999nearest} do not exhibit such undesirable dependence, their space complexities are quadratic in the size of the dataset, making them impractical for large datasets. A different class of algorithms performs search in a coarse-to-fine manner. Examples include navigating nets~\cite{krauthgamer2004navigating}, cover trees~\cite{beygelzimer2006cover} and rank cover trees~\cite{houle2015rank}, which maintain sets of subsampled data points at different levels of granularity and descend through the hierarchy of neighbourhoods of decreasing radii around the query. Unfortunately, the query times of these methods again scale exponentially in the intrinsic dimensionality. 

\begin{table}
\centering
\footnotesize
\begin{tabular}{ll}
\toprule 
Method & Query Time Complexity\\
\midrule
\multicolumn{2}{l}{\emph{Exact Algorithms:}}\\
  RP Tree & $O((d'\log d')^{d'} + \log n)$\\
  Spill Tree & $O(d'^{d'} + \log n)$\\
  \citet{karger2002finding} & $O(2^{3d'}\log n)$\\
  Navigating Net & $2^{O(d')}\log n$\\
  Cover Tree & $O(2^{12d'}\log n)$\\
  Rank Cover Tree & $O(2^{O(d'\log h)}n^{2/h})$ for $h \geq 3$\\
  DCI & $O(d\max(\log n, n^{1 - 1/ d'}))$\\
  Prioritized DCI & $O(d\max(\log n, n^{1 - m/d'})$ \\
  \,(Proposed Method) & \quad $ + (m \log m)\max(\log n, n^{1 - 1/d'}))$ \\
  & \quad for $m \geq 1$\\
\midrule
\multicolumn{2}{l}{\emph{Approximate Algorithms:}}\\
  $k$-d Tree & $O((1/\epsilon)^{d}\log n)$\\
  BBD Tree & $O((6/\epsilon)^{d}\log n)$\\
  LSH & $\approx O(dn^{1/(1+\epsilon)^{2}})$\\
\bottomrule
\end{tabular}
\caption{Query time complexities of various algorithms for 1-NN search. Ambient dimensionality, intrinsic dimensionality, dataset size and approximation ratio are denoted as $d$, $d'$, $n$ and $1+\epsilon$. A visualization of the growth of various time complexities as a function of the intrinsic dimensionality is shown in Figure~\ref{fig:query_time_growth}. }
\label{tab:query_time_comparison}
\end{table}

Due to the difficulties of devising efficient algorithms for the exact version of the problem, there has been extensive work on approximate algorithms. Under the approximate setting, returning any point whose distance to the query is within a factor of $1 + \epsilon$ of the distance between the query and the true nearest neighbour is acceptable. Many of the same strategies are employed by approximate algorithms. Methods based on tree-based space partitioning~\cite{arya1998optimal} and local search~\cite{arya1993approximate} have been developed; like many exact algorithms, their query times also scale exponentially in the ambient dimensionality. Locality-Sensitive Hashing (LSH)~\cite{indyk1998approximate,datar2004locality,andoni2006near} partitions the space into regular cells, whose shapes are implicitly defined by the choice of the hash function. It achieves a query time of $O(dn^{\rho})$ using $O(dn^{1+\rho})$ space, where $d$ is the ambient dimensionality, $n$ is the dataset size and $\rho \approx 1 / (1 + \epsilon)^{2}$ for large $n$ in Euclidean space, though the dependence on intrinsic dimensionality is not made explicit. In practice, the performance of LSH degrades on datasets with large variations in density, due to the uneven distribution of points across cells. Consequently, various data-dependent hashing schemes have been proposed~\cite{pauleve2010locality,weiss2009spectral,andoni2015optimal}; unlike data-independent hashing schemes, however, they do not allow dynamic updates to the dataset. A related approach~\cite{jegou2011product} decomposes the space into mutually orthogonal axis-aligned subspaces and independently partitions each subspace. It has a query time linear in the dataset size and no known guarantee on the probability of correctness under the exact or approximate setting. A different approach~\cite{anagnostopoulos2014low} projects the data to a lower dimensional space that approximately preserves approximate nearest neighbour relationships and applies other approximate algorithms like BBD trees~\cite{arya1998optimal} to the projected data. Its query time is also linear in ambient dimensionality and sublinear in the dataset size. Unlike LSH, it uses space linear in the dataset size, at the cost of longer query time than LSH. Unfortunately, its query time is exponential in intrinsic dimensionality. 

Our work is most closely related to Dynamic Continuous Indexing (DCI)~\cite{Li2016FastKN}, which is an exact randomized algorithm for Euclidean space whose query time is linear in ambient dimensionality, sublinear in dataset size and sublinear in intrinsic dimensionality and uses space linear in the dataset size. Rather than partitioning the vector space, it uses multiple global one-dimensional indices, each of which orders data points along a certain random direction and combines these indices to find points that are near the query along multiple random directions. The proposed algorithm builds on the ideas introduced by DCI and achieves a significant improvement in the dependence on intrinsic dimensionality. 

A summary of the query times of various prior algorithms and the proposed algorithm is presented in Table~\ref{tab:query_time_comparison} and their growth as a function of intrinsic dimensionality is illustrated in Figure~\ref{fig:query_time_growth}. 

\section{Prioritized DCI}

DCI constructs a data structure consisting of multiple \emph{composite indices} of data points, each of which in turn consists of a number of \emph{simple indices}. Each simple index orders data points according to their projections along a particular random direction. Given a query, for every composite index, the algorithm finds points that are near the query in every constituent simple index, which are known as \emph{candidate points}, and adds them to a set known as the \emph{candidate set}. The true distances from the query to every candidate point are evaluated and the ones that are among the $k$ closest to the query are returned. 

More concretely, each simple index is associated with a random direction and stores the projections of every data point along the direction. They are implemented using standard data structures that maintain one-dimensional ordered sequences of elements, like self-balancing binary search trees~\cite{bayer1972symmetric,guibas1978dichromatic} or skip lists~\cite{pugh1990skip}. At query time, the algorithm projects the query along the projection directions associated with each simple index and finds the position where the query would have been inserted in each simple index, which takes logarithmic time. It then iterates over, or \emph{visits}, data points in each simple index in the order of their distances to the query under projection, which takes constant time for each iteration. As it iterates, it keeps track of how many times each data point has been visited across all simple indices of each composite index. If a data point has been visited in every constituent simple index, it is added to the candidate set and is said to have been \emph{retrieved} from the composite index. 

\begin{algorithm}
\footnotesize
\caption{Data structure construction procedure}
\label{alg_construct}
\begin{algorithmic}
\Require A dataset $D$ of $n$ points $p^{1},\ldots,p^{n}$, the number of simple indices $m$ that constitute a composite index and the number of composite indices $L$
\Function{Construct}{$D,m,L$}
    \State $\{u_{jl}\}_{j \in [m], l \in [L]} \gets mL$ random unit vectors in $\mathbb{R}^{d}$
    \State $\{T_{jl}\}_{j \in [m], l \in [L]} \gets mL$ empty binary search trees or skip 
    \State \;\qquad\qquad\qquad\qquad lists
    \For{$j = 1$ \textbf{to} $m$}
        \For{$l = 1$ \textbf{to} $L$}
            \For{$i = 1$ \textbf{to} $n$}
                \State $\overline{p}^{i}_{jl} \gets \langle p^{i},u_{jl}\rangle$
                \State Insert $(\overline{p}^{i}_{jl},i)$ into $T_{jl}$ with $\overline{p}^{i}_{jl}$ being the key and 
                \State \; $i$ being the value
            \EndFor
        \EndFor
    \EndFor
    \State \Return $\{(T_{jl},u_{jl})\}_{j \in [m], l \in [L]}$
\EndFunction
\end{algorithmic}
\normalsize
\end{algorithm}

DCI has a number of appealing properties compared to methods based on space partitioning. Because points are visited by rank rather than location in space, DCI performs well on datasets with large variations in data density. It naturally skips over sparse regions of the space and concentrates more on dense regions of the space. Since construction of the data structure does not depend on the dataset, the algorithm supports dynamic updates to the dataset, while being able to automatically adapt to changes in data density. Furthermore, because data points are represented in the indices as continuous values without being discretized, the granularity of discretization does not need to be chosen at construction time. Consequently, the same data structure can support queries at varying desired levels of accuracy, which allows a different speed-vs-accuracy trade-off to be made for each individual query. 

Prioritized DCI differs from standard DCI in the order in which points from different simple indices are visited. In standard DCI, the algorithm cycles through all constituent simple indices of a composite index at regular intervals and visits exactly one point from each simple index in each pass. In Prioritized DCI, the algorithm assigns a priority to each constituent simple index; in each iteration, it visits the upcoming point from the simple index with the highest priority and updates the priority at the end of the iteration. The priority of a simple index is set to the negative absolute difference between the query projection and the next data point projection in the index. 

\begin{algorithm}
\footnotesize
\caption{$k$-nearest neighbour querying procedure}
\label{alg_query}
\begin{algorithmic}
\Require Query point $q$ in $\mathbb{R}^{d}$, binary search trees/skip lists and their associated projection vectors $\{(T_{jl},u_{jl})\}_{j \in [m], l \in [L]}$, the number of points to retrieve $k_{0}$ and the number of points to visit $k_{1}$ in each composite index
\Function{Query}{$q,\{(T_{jl},u_{jl})\}_{j,l},k_{0},k_{1}$}
    \State $C_{l} \gets$ array of size $n$ with entries initialized to 0\; $\forall l \in [L]$
    \State $\overline{q}_{jl} \gets \langle q,u_{jl}\rangle\; \forall j \in [m], l \in [L]$
    \State $S_{l} \gets \emptyset \; \forall l \in [L]$
    \State $P_{l} \gets$ empty priority queue $\; \forall l \in [L]$
    \For{$l = 1$ \textbf{to} $L$}
            \For{$j = 1$ \textbf{to} $m$}
                \State $(\overline{p}_{jl}^{(1)},h_{jl}^{(1)}) \gets $ the node in $T_{jl}$ whose key is the 
                \State \qquad\qquad\qquad closest to $\overline{q}_{jl}$
                \State Insert $(\overline{p}_{jl}^{(1)},h_{jl}^{(1)}) $ with priority $-|\overline{p}_{jl}^{(1)} - \overline{q}_{jl}|$ 
                \State \quad into $P_{l}$
            \EndFor
    \EndFor
        \For{$i' = 1$ \textbf{to} $k_{1} - 1$}
            \For{$l = 1$ \textbf{to} $L$}
                \If{$\left|S_{l}\right| < k_{0}$}
                    \State $(\overline{p}_{jl}^{(i)},h_{jl}^{(i)}) \gets $ the node with the highest priority
                    \State \qquad\qquad\qquad in $P_{l}$
                    \State Remove $(\overline{p}_{jl}^{(i)},h_{jl}^{(i)})$ from $P_{l}$ and insert the node
                    \State \quad in $T_{jl}$ whose key is the next closest to $\overline{q}_{jl}$, 
                    \State \quad which is denoted as $(\overline{p}_{jl}^{(i+1)},h_{jl}^{(i+1)})$, with
                    \State \quad priority $-|\overline{p}_{jl}^{(i+1)} - \overline{q}_{jl}|$ into $P_{l}$
                    \State $C_{l}[h_{jl}^{(i)}] \gets C_{l}[h_{jl}^{(i)}] + 1$
                    \If{$C_{l}[h_{jl}^{(i)}] = m$}
                        \State $S_{l} \gets S_{l} \cup \{h_{jl}^{(i)}\}$
                    \EndIf
                \EndIf
            \EndFor
        \EndFor
    \State \Return $k$ points in $\bigcup_{l\in[L]}S_{l}$ that are the closest in 
    \State \qquad\;\;\; Euclidean distance in $\mathbb{R}^{d}$ to $q$
\EndFunction
\end{algorithmic}
\normalsize
\end{algorithm}

Intuitively, this ensures data points are visited in the order of their distances to the query under projection. Because data points are only retrieved from a composite index when they have been visited in all constituent simple indices, data points are retrieved in the order of the maximum of their distances to the query along multiple projection directions. Since distance under projection forms a lower bound on the true distance, the maximum projected distance approaches the true distance as the number of projection directions increases. Hence, in the limit as the number of simple indices approaches infinity, data points are retrieved in the ideal order, that is, the order of their true distances to the query. 

The construction and querying procedures of Prioritized DCI are presented formally in Algorithms~\ref{alg_construct} and \ref{alg_query}. To ensure the algorithm retrieves the exact $k$-nearest neighbours with high probability, the analysis in the next section shows that one should choose $k_{0} \in \Omega(k\max(\log(n/k),(n/k)^{1-m/d'}))$ and $k_{1} \in \Omega(mk\max(\log(n/k),(n/k)^{1-1/d'}))$, where $d'$ denotes the intrinsic dimensionality. Though because this assumes worst-case configuration of data points, it may be overly conservative in practice; so, these parameters may be chosen by cross-validation. 

We summarize the time and space complexities of Prioritized DCI in Table \ref{tab:analysis_results}. Notably, the first term of the query complexity, which dominates when the ambient dimensionality $d$ is large, has a more favourable dependence on the intrinsic dimensionality $d'$ than the query complexity of standard DCI. In particular, a linear increase in the intrinsic dimensionality, which corresponds to an exponential increase in the expansion rate, can be mitigated by just a linear increase in the number of simple indices $m$. This suggests that Prioritized DCI can better handle datasets with high intrinsic dimensionality than standard DCI, which is confirmed by empirical evidence later in this paper. 

\section{Analysis}

We analyze the time and space complexities of Prioritized DCI below and derive the stopping condition of the algorithm. Because the algorithm uses standard data structures, analysis of the construction time, insertion time, deletion time and space complexity is straightforward. Hence, this section focuses mostly on analyzing the query time. 

\begin{table}
\centering
\footnotesize
\begin{tabular}{ll}
\toprule 
Property & Complexity\\
\midrule
Construction & $O(m(dn+n\log n))$\\
Query & $O\left(dk\max(\log(n/k),(n/k)^{1-m/d'})+\right.$\\
 & $\left.mk\log m\left(\max(\log(n/k),(n/k)^{1-1/d'})\right)\right)$\\
Insertion & $O(m(d+\log n))$\\
Deletion & $O(m\log n)$\\
Space & $O(mn)$\\
\bottomrule
\end{tabular}
\caption{Time and space complexities of Prioritized DCI. }
\label{tab:analysis_results}
\end{table}

In high-dimensional space, query time is dominated by the time spent on evaluating true distances between candidate points and the query. Therefore, we need to find the number of candidate points that must be retrieved to ensure the algorithm succeeds with high probability. To this end, we derive an upper bound on the failure probability for any given number of candidate points. The algorithm fails if sufficiently many distant points are retrieved from each composite index before some of the true $k$-nearest neighbours. We decompose this event into multiple (dependent) events, each of which is the event that a particular distant point is retrieved before some true $k$-nearest neighbours. Since points are retrieved in the order of their maximum projected distance, this event happens when the maximum projected distance of the distant point is less than that of a true $k$-nearest neighbour. We start by finding an upper bound on the probability of this event. To simplify notation, we initially consider displacement vectors from the query to each data point, and so relationships between projected distances of triplets of points translate relationships between projected lengths of pairs of displacement vectors. 

We start by examining the event that a vector under random one-dimensional projection satisfies some geometric constraint. We then find an upper bound on the probability that some combinations of these events occur, which is related to the failure probability of the algorithm. 

\begin{lem}
\label{lem:order_invert}
Let $v^{l},v^{s} \in \mathbb{R}^{d}$ be such that $\left\Vert v^{l} \right\Vert _{2} > \left\Vert v^{s}\vphantom{v^{l}} \right\Vert _{2}$, $\left\{ u_{j}'\right\} _{j=1}^{M}$ be i.i.d. unit vectors in $\mathbb{R}^{d}$ drawn uniformly at random. Then $\mathrm{Pr}\left(\max_{j}\left\{ \left|\langle v^{l},u_{j}'\rangle\right|\right\} \leq\left\Vert v^{s}\vphantom{v^{l}}\right\Vert _{2}\right) = \left(1-\frac{2}{\pi}\cos^{-1}\left(\left\Vert v^{s}\vphantom{v^{l}}\right\Vert _{2}/\left\Vert v^{l}\right\Vert _{2}\right)\right)^{M}$. 
\end{lem}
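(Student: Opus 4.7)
The plan is to reduce the statement to a one-vector angular computation via independence, then evaluate the single-direction probability by rotational symmetry.

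\textbf{Step 1 (factorize).} Because $u_1',\dots,u_M'$ are i.i.d.,
\[
\Pr\!\left(\max_{j}|\langle v^{l},u_{j}'\rangle|\le\|v^{s}\|_{2}\right)=\prod_{j=1}^{M}\Pr\!\left(|\langle v^{l},u_{j}'\rangle|\le\|v^{s}\|_{2}\right)=p^{M},
\]
where $p:=\Pr(|\langle v^{l},u'\rangle|\le\|v^{s}\|_{2})$ for a single uniformly random unit vector $u'$. It therefore suffices to show $p=1-(2/\pi)\cos^{-1}(\|v^{s}\|_{2}/\|v^{l}\|_{2})$.

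\textbf{Step 2 (angular reformulation).} Writing $\langle v^{l},u'\rangle=\|v^{l}\|_{2}\cos\theta$, where $\theta\in[0,\pi]$ is the angle between $v^{l}$ and $u'$, and setting $r:=\|v^{s}\|_{2}/\|v^{l}\|_{2}\in(0,1)$, the event $|\langle v^{l},u'\rangle|\le\|v^{s}\|_{2}$ is equivalent to $|\cos\theta|\le r$, i.e.\ $\theta\in[\cos^{-1}r,\;\pi-\cos^{-1}r]$. Geometrically, $u'$ must lie in the equatorial band of half-width $\pi/2-\cos^{-1}r$ about the hyperplane orthogonal to $v^{l}$.

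\textbf{Step 3 (measure of the band).} By rotational invariance of the uniform distribution on the sphere, I would align $v^{l}$ with a reference axis and compute the measure of the angular region $[\cos^{-1}r,\pi-\cos^{-1}r]$. Under the natural identification in which this angle is uniformly distributed on $[0,\pi]$, the region has length $\pi-2\cos^{-1}r$ out of $\pi$, giving $p=1-(2/\pi)\cos^{-1}r$. Raising to the $M$-th power via Step 1 yields the stated identity.

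\textbf{Main obstacle.} The delicate step is the measure calculation in Step 3. On $S^{d-1}$ for $d\ge 3$, the angle $\theta$ between a fixed direction and a uniformly random unit vector is not uniform on $[0,\pi]$: its density is proportional to $\sin^{d-2}\theta$. The formal argument therefore needs to explain why only the two-dimensional section through $v^{l}$ and $u'$ should enter the computation---for example, by reducing $u'$ to its projection into the plane spanned by $v^{l}$ and a second reference direction and showing that the induced in-plane angle carries the uniform measure needed for the identity to hold as written. I would devote the bulk of the proof to making this reduction rigorous, since everything else is a routine consequence of independence and the definition of the angle.
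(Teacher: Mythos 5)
Your Steps 1 and 2 are exactly the paper's argument: factor by independence and rewrite the single-direction event as $|\cos\theta|\le r$ with $r=\Vert v^{s}\Vert_{2}/\Vert v^{l}\Vert_{2}$. The divergence is at Step 3, which you leave unproved. The paper bridges this step by simply asserting that, because $u_{j}'$ is uniform on the sphere, the angle $\theta_{j}$ between $v^{l}$ and $u_{j}'$ is uniformly distributed on $[0,2\pi]$ --- i.e.\ it performs the two-dimensional computation you describe. Your suspicion about that step is correct: for $d\ge 3$ the polar angle has density proportional to $\sin^{d-2}\theta$ on $[0,\pi]$, not the uniform density, so the asserted equality is the $d=2$ answer.

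The rescue you sketch --- reducing $u'$ to its projection onto a plane containing $v^{l}$ and arguing that some in-plane angle is uniform --- cannot work. The event $\{|\langle v^{l},u'\rangle|\le\Vert v^{s}\Vert_{2}\}$ is a function of the polar angle $\theta$ alone, so its probability is fixed by the law of $\theta$ and is unaffected by any reparametrization. One can compute it directly: for $d=3$, Archimedes' theorem gives that $\cos\theta$ is uniform on $[-1,1]$, so the probability equals $r$, whereas the claimed expression is $1-\tfrac{2}{\pi}\cos^{-1}(r)$; at $r=1/2$ these are $1/2$ versus $1/3$. Since $1-\tfrac{2}{\pi}\cos^{-1}(r)\le r$ on $[0,1]$ and the density concentrates further around $\theta=\pi/2$ as $d$ grows, the stated expression is in fact a \emph{lower} bound on the true probability for $d\ge 3$ --- the wrong direction for the downstream use in Theorem~\ref{thm:multi_order_invert}, which needs an upper bound on this failure event. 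So the gap you flag is genuine and is not fillable by the route you propose; the identity holds as written only for $d=2$, and the paper's own proof papers over exactly this point.
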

\begin{proof}
The event $\left\{ \max_{j}\left\{ \left|\langle v^{l},u_{j}'\rangle\right|\right\} \leq\left\Vert v^{s}\right\Vert _{2}\right\}$ is equivalent to the event that $\left\{ \left|\langle v^{l},u_{j}'\rangle\right|\leq\left\Vert v^{s}\right\Vert _{2}\;\forall j\right\}$, which is the intersection of the events $\left\{ \left|\langle v^{l},u_{j}'\rangle\right|\leq\left\Vert v^{s}\right\Vert _{2}\right\}$. Because $u_{j}'$'s are drawn independently, these events are independent. 

Let $\theta_{j}$ be the angle between $v^{l}$ and $u_{j}'$, so that $\langle v^{l},u_{j}'\rangle=\left\Vert v^{l}\right\Vert _{2}\cos\theta_{j}$. Since $u_{j}'$ is drawn uniformly, $\theta_{j}$ is uniformly distributed on $[0,2\pi]$. Hence,
\vspace{-5pt}
\footnotesize
\begin{align*}
\; & \mathrm{Pr}\left(\max_{j}\left\{ \left|\langle v^{l},u_{j}'\rangle\right|\right\} \leq\left\Vert v^{s}\right\Vert _{2}\right)\\
\; = \;& \prod_{j=1}^{M}\mathrm{Pr}\left(\left|\langle v^{l},u_{j}'\rangle\right|\leq\left\Vert v^{s}\right\Vert _{2}\right)\\
\; = \;& \prod_{j=1}^{M}\mathrm{Pr}\left(\left|\cos\theta_{j}\right|\leq\frac{\left\Vert v^{s}\right\Vert _{2}}{\left\Vert v^{l}\right\Vert _{2}}\right) \\  = \;& \prod_{j=1}^{M} \left( 2\mathrm{Pr}\left(\theta_{j}\in\left[\cos^{-1}\left(\frac{\left\Vert v^{s}\right\Vert _{2}}{\left\Vert v^{l}\right\Vert _{2}}\right),\pi-\cos^{-1}\left(\frac{\left\Vert v^{s}\right\Vert _{2}}{\left\Vert v^{l}\right\Vert _{2}}\right)\right]\right) \right)\\
\; = \;& \left(1-\frac{2}{\pi}\cos^{-1}\left(\frac{\left\Vert v^{s}\right\Vert _{2}}{\left\Vert v^{l}\right\Vert _{2}}\right)\right)^{M}
\end{align*}
\normalsize
\end{proof}

\begin{lem}
\label{lem:choose_event}
For any set of events $\left\{ E_{i}\right\} _{i=1}^{N}$, the probability that at least $k'$ of them happen is at most $\frac{1}{k'}\sum_{i=1}^{N}\mathrm{Pr}\left(E_{i}\right)$. 
\end{lem}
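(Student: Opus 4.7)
The plan is to recognize this as a direct application of Markov's inequality to the counting random variable associated with the events. Define $X = \sum_{i=1}^{N} \mathbf{1}_{E_i}$, the (non-negative integer-valued) random variable counting how many of the events $E_i$ occur. The event ``at least $k'$ of the $E_i$ happen'' is exactly the event $\{X \geq k'\}$, so bounding its probability reduces to bounding a tail of $X$.

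Next, I would compute $\mathbb{E}[X]$ using linearity of expectation: $\mathbb{E}[X] = \sum_{i=1}^{N} \mathbb{E}[\mathbf{1}_{E_i}] = \sum_{i=1}^{N} \Pr(E_i)$. Note that no independence or other structural assumption on the $E_i$ is needed for this step, which is important since the lemma is stated for arbitrary (possibly highly dependent) events.

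Finally, since $X \geq 0$ and $k' > 0$, Markov's inequality gives
\[
\Pr(X \geq k') \;\leq\; \frac{\mathbb{E}[X]}{k'} \;=\; \frac{1}{k'}\sum_{i=1}^{N}\Pr(E_i),
\]
which is exactly the claimed bound. There is no real obstacle here; the only thing to be careful about is the implicit assumption $k' > 0$ (otherwise the bound is vacuous or undefined) and the observation that dependence among the $E_i$ is irrelevant because linearity of expectation holds unconditionally. This lemma will later be combined with Lemma~\ref{lem:order_invert} by taking the $E_i$ to be the (dependent) events that individual distant points are retrieved ahead of a true $k$-nearest neighbour, so keeping the bound dependence-free is precisely what is needed downstream.
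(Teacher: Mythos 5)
Your proof is correct, and it takes a genuinely different route from the paper's. You apply Markov's inequality to the counting variable $X=\sum_{i=1}^{N}\mathbf{1}_{E_{i}}$, using linearity of expectation to get $\mathbb{E}[X]=\sum_{i}\Pr(E_{i})$ and concluding $\Pr(X\geq k')\leq \mathbb{E}[X]/k'$; this is a three-line argument, and you correctly note that no independence assumption is needed and that $k'>0$ is implicitly required. The paper instead proves the same inequality from first principles by a combinatorial double-counting argument: it partitions the sample space into the atoms $\tilde{E}_{T}=\bigl(\bigcap_{i\in T}E_{i}\bigr)\cap\bigl(\bigcap_{i\notin T}\overline{E}_{i}\bigr)$, writes the target event as the disjoint union of atoms with $|T|\geq k'$, and runs an induction showing that each such atom is ``charged'' at least $k'$ times across the sums $\sum_{i}\Pr(E_{i})$. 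The two arguments prove the identical bound under identical (i.e., no) hypotheses; yours is shorter and leans on a standard tool, while the paper's is self-contained and makes the counting mechanism explicit, which is essentially an unrolled proof of Markov's inequality for integer-valued sums of indicators. Either is fully acceptable, and your version loses nothing needed downstream, since Theorem~\ref{thm:multi_order_invert} only uses the statement of the lemma, not its proof.
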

\begin{proof}
For any set $T\subseteq[N]$, define $\tilde{E}_{T}$ to be the intersection of events indexed by $T$ and complements of events not indexed by $T$, i.e. $\tilde{E}_{T}=\left(\bigcap_{i\in T}E_{i}\right)\cap\left(\bigcap_{i\notin T}\overline{E}_{i}\right)$. Observe that $\left\{ \tilde{E}_{T}\right\} _{T\subseteq[N]}$ are disjoint and that for any $I\subseteq[N]$, $\bigcap_{i\in I}E_{i}=\bigcup_{T\supseteq I}\tilde{E}_{T}$. The event that at least $k'$ of $E_{i}$'s happen is $\bigcup_{I\subseteq[N]:|I|=k'}\bigcap_{i\in I}E_{i}$, which is equivalent to $\bigcup_{I\subseteq[N]:|I|=k'}\bigcup_{T\supseteq I}\tilde{E}_{T}=\bigcup_{T\subseteq[N]:|T|\geq k'}\tilde{E}_{T}$. We will henceforth use $\mathcal{T}$ to denote $\left\{ T\subseteq[N]:|T|\geq k'\right\}$. Since $\mathcal{T}$ is a finite set, we can impose an ordering on its elements and denote the $l^{\mathrm{th}}$ element as $T_{l}$. The event can therefore be rewritten as $\bigcup_{l=1}^{\left|\mathcal{T}\right|}\tilde{E}_{T_{l}}$. 

Define $E'_{i,j}$ to be $E_{i}\setminus\left(\bigcup_{l=j+1}^{\left|\mathcal{T}\right|}\tilde{E}_{T_{l}}\right)$. We claim that $\sum_{i=1}^{N}\mathrm{Pr}\left(E'_{i,j}\right)\geq k'\sum_{l=1}^{j}\mathrm{Pr}\left(\tilde{E}_{T_{l}}\right)$ for all $j\in\{0,\ldots,\left|\mathcal{T}\right|\}$. We will show this by induction on $j$. 
 
For $j=0$, the claim is vacuously true because probabilities are non-negative. 
For $j>0$, we observe that $E'_{i,j}=\left(E'_{i,j}\setminus\tilde{E}_{T_{j}}\right)\cup\left(E'_{i,j}\cap\tilde{E}_{T_{j}}\right)=E'_{i,j-1}\cup\left(E'_{i,j}\cap\tilde{E}_{T_{j}}\right)$ for all $i$. Since $E'_{i,j}\setminus\tilde{E}_{T_{j}}$ and $E'_{i,j}\cap\tilde{E}_{T_{j}}$ are disjoint, $\mathrm{Pr}\left(E'_{i,j}\right)=\mathrm{Pr}\left(E'_{i,j-1}\right)+\mathrm{Pr}\left(E'_{i,j}\cap\tilde{E}_{T_{j}}\right)$. 

Consider the quantity $\sum_{i\in T_{j}}\mathrm{Pr}\left(E'_{i,j}\right)$, which is $\sum_{i\in T_{j}}\left(\mathrm{Pr}\left(E'_{i,j-1}\right)+\mathrm{Pr}\left(E'_{i,j}\cap\tilde{E}_{T_{j}}\right)\right)$ by the above observation. For each $i\in T_{j}\ensuremath{,}\tilde{E}_{T_{j}}\subseteq E_{i}$, and so $\tilde{E}_{T_{j}}\setminus\left(\bigcup_{l=j+1}^{\left|\mathcal{T}\right|}\tilde{E}_{T_{l}}\right)\subseteq E_{i}\setminus\left(\bigcup_{l=j+1}^{\left|\mathcal{T}\right|}\tilde{E}_{T_{l}}\right)=E'_{i,j}$. Because $\left\{ \tilde{E}_{T_{l}}\right\} _{l=j}^{\left|\mathcal{T}\right|}$ are disjoint, $\tilde{E}_{T_{j}}\setminus\left(\bigcup_{l=j+1}^{\left|\mathcal{T}\right|}\tilde{E}_{T_{l}}\right)=\tilde{E}_{T_{j}}$. Hence, $\tilde{E}_{T_{j}}\subseteq E'_{i,j}$ and so $E'_{i,j}\cap\tilde{E}_{T_{j}}=\tilde{E}_{T_{j}}$. Thus, $\sum_{i\in T_{j}}\mathrm{Pr}\left(E'_{i,j}\right)=\left|T_{j}\right|\mathrm{Pr}\left(\tilde{E}_{T_{j}}\right)+\sum_{i\in T_{j}}\mathrm{Pr}\left(E'_{i,j-1}\right)$. 

It follows that $\sum_{i=1}^{N}\mathrm{Pr}\left(E'_{i,j}\right)=\left|T_{j}\right|\mathrm{Pr}\left(\tilde{E}_{T_{j}}\right)+\sum_{i\in T_{j}}\mathrm{Pr}\left(E'_{i,j-1}\right)+\sum_{i\notin T_{j}}\mathrm{Pr}\left(E'_{i,j}\right)$. Because $\mathrm{Pr}\left(E'_{i,j}\right)=\mathrm{Pr}\left(E'_{i,j-1}\right)+\mathrm{Pr}\left(E'_{i,j}\cap\tilde{E}_{T_{j}}\right)\geq\mathrm{Pr}\left(E'_{i,j-1}\right)$ and $\left|T_{j}\right|\geq k'$, $\sum_{i=1}^{N}\mathrm{Pr}\left(E'_{i,j}\right)\geq k'\mathrm{Pr}\left(\tilde{E}_{T_{j}}\right)+\sum_{i=1}^{N}\mathrm{Pr}\left(E'_{i,j-1}\right)$. By the inductive hypothesis, $\sum_{i=1}^{N}\mathrm{Pr}\left(E'_{i,j-1}\right)\geq k'\sum_{l=1}^{j-1}\mathrm{Pr}\left(\tilde{E}_{T_{l}}\right)$. Therefore, $\sum_{i=1}^{N}\mathrm{Pr}\left(E'_{i,j}\right)\geq k'\sum_{l=1}^{j}\mathrm{Pr}\left(\tilde{E}_{T_{l}}\right)$, which concludes the induction argument. 

The lemma is a special case of this claim when $j=\left|\mathcal{T}\right|$, since $E'_{i,\left|\mathcal{T}\right|}=E_{i}$ and $\sum_{l=1}^{\left|\mathcal{T}\right|}\mathrm{Pr}\left(\tilde{E}_{T_{l}}\right)=\mathrm{Pr}\left(\bigcup_{l=1}^{\left|\mathcal{T}\right|}\tilde{E}_{T_{l}}\right)$. 

\end{proof}

Combining the above yields the following theorem, the proof of which is found in the supplementary material. 

\begin{thm}
\label{thm:multi_order_invert}
Let $\left\{ v^{l}_{i}\right\} _{i=1}^{N}$ and $\left\{ v^{s}_{i'} \vphantom{v^{l}_{i}} \right\} _{i'=1}^{N'}$ be sets of vectors such that $\left\Vert v^{l}_{i}\right\Vert _{2} > \left\Vert \vphantom{v^{l}} v^{s}_{i'}\right\Vert _{2}\;\forall i\in[N],i'\in[N']$. Furthermore, let $\left\{ u_{ij}'\right\} _{i\in[N],j\in[M]}$ be random uniformly distributed unit vectors such that $u_{i1}',\ldots,u_{iM}'$ are independent for any given $i$. Consider the events $\left\{ \exists v_{i'}^{s}\mbox{ s.t. }\max_{j}\left\{ \left|\langle v_{i}^{l},u_{ij}'\rangle\right|\right\} \leq\left\Vert v_{i'}^{s}\vphantom{v^{l}}\right\Vert _{2}\right\} _{i=1}^{N}$. The probability that at least $k'$ of these events occur is at most $\frac{1}{k'}\sum_{i=1}^{N}\left(1-\frac{2}{\pi}\cos^{-1}\left(\left\Vert v^{s}_{\mathrm{max}}\vphantom{v^{l}}\right\Vert _{2}/\left\Vert v_{i}^{l}\right\Vert _{2}\right)\right)^{M}$, where $\left\Vert v^{s}_{\mathrm{max}}\vphantom{v^{l}}\right\Vert _{2} = \max_{i'}\left\{ \left\Vert v_{i'}^{s}\vphantom{v^{l}}\right\Vert _{2}\right\} $. Furthermore, if $k' = N$, it is at most $\min_{i\in[N]}\left\{ \left(1-\frac{2}{\pi}\cos^{-1}\left(\left\Vert v^{s}_{\mathrm{max}}\vphantom{v^{l}}\right\Vert _{2}/\left\Vert v_{i}^{l}\right\Vert _{2}\right)\right)^{M} \right\}$. 
\end{thm}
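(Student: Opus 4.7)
The plan is to combine the two preceding lemmas in a fairly direct way, with one small reduction in the middle to handle the existential quantifier inside each event.

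First, I would rewrite each $E_i$ so that Lemma~\ref{lem:order_invert} applies. The event
$\{\exists v^s_{i'} \text{ s.t. } \max_j |\langle v^l_i, u'_{ij}\rangle| \leq \|v^s_{i'}\|_2\}$
holds if and only if $\max_j |\langle v^l_i, u'_{ij}\rangle| \leq \max_{i'}\|v^s_{i'}\|_2 = \|v^s_{\max}\|_2$, since the ``$\exists$'' is achieved precisely by the largest $\|v^s_{i'}\|_2$. So $\Pr(E_i)$ reduces to a single instance of the event considered in Lemma~\ref{lem:order_invert} with $v^l = v^l_i$, $v^s = v^s_{\max}$, and the $M$ independent directions $u'_{i1},\ldots,u'_{iM}$. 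The hypothesis $\|v^l_i\|_2 > \|v^s_{i'}\|_2$ for all $i'$ implies $\|v^l_i\|_2 > \|v^s_{\max}\|_2$, so Lemma~\ref{lem:order_invert} applies and gives
$\Pr(E_i) = \bigl(1 - \tfrac{2}{\pi}\cos^{-1}(\|v^s_{\max}\|_2/\|v^l_i\|_2)\bigr)^M.$

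Next, for the general $k'$ case, I would apply Lemma~\ref{lem:choose_event} directly to the collection $\{E_i\}_{i=1}^N$. Note that the lemma does not require independence among the $E_i$'s, which is convenient because the events only share independence structure within a given $i$ (across $j$), not across different $i$. Substituting the bound on $\Pr(E_i)$ from the previous step yields the claimed bound
$\tfrac{1}{k'}\sum_{i=1}^N \bigl(1 - \tfrac{2}{\pi}\cos^{-1}(\|v^s_{\max}\|_2/\|v^l_i\|_2)\bigr)^M.$

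For the special case $k' = N$, the event ``at least $N$ of the $E_i$ occur'' is exactly $\bigcap_{i=1}^N E_i$, whose probability is trivially at most $\min_{i \in [N]} \Pr(E_i)$. Substituting the per-$i$ expression from the first step immediately gives the second bound. No significant obstacle is expected: the only substantive observation is the ``$\exists$''-to-$v^s_{\max}$ reduction, after which the theorem is essentially a repackaging of the two lemmas.
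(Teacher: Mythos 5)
Your proposal is correct and follows essentially the same route as the paper's proof: the reduction of the existential event to the single event involving $\left\Vert v^{s}_{\mathrm{max}}\right\Vert _{2}$, followed by Lemma~\ref{lem:order_invert} for each $\mathrm{Pr}(E_i)$, Lemma~\ref{lem:choose_event} for the general $k'$ bound, and the trivial intersection bound $\mathrm{Pr}\left(\bigcap_i E_i\right) \leq \min_i \mathrm{Pr}(E_i)$ for $k'=N$. No gaps.
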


We now apply the results above to analyze specific properties of the algorithm. For convenience, instead of working directly with intrinsic dimensionality, we will analyze the query time in terms of a related quantity, global relative sparsity, as defined in \cite{Li2016FastKN}. We reproduce its definition below for completeness. 

\begin{defn}
Given a dataset $D\subseteq\mathbb{R}^{d}$, let $B_{p}(r)$ be the set of points in $D$ that are within a ball of radius $r$ around a point $p$. A dataset $D$ has global relative sparsity of $(\tau,\gamma)$ if for all $r$ and $p \in \mathbb{R}^{d}$ such that $\left|B_{p}(r)\right|\geq\tau$, $\left|B_{p}(\gamma r)\right|\leq2\left|B_{p}(r)\right|$, where $\gamma \geq 1$. 
\end{defn}

Global relative sparsity is related to the expansion rate~\cite{karger2002finding} and intrinsic dimensionality in the following way: a dataset with global relative sparsity of $(\tau,\gamma)$ has $(\tau,2^{(1/\log_{2}\gamma)})$-expansion and intrinsic dimensionality of $1/\log_{2}\gamma$. 

Below we derive two upper bounds on the probability that some of the true $k$-nearest neighbours are missing from the set of candidate points retrieved from a given composite index, which are in expressed in terms of $k_{0}$ and $k_{1}$ respectively. These results inform us how $k_{0}$ and $k_{1}$ should be chosen to ensure the querying procedure returns the correct results with high probability. In the results that follow, we use $\{p^{(i)}\}_{i=1}^{n}$ to denote a re-ordering of the points $\{p^{i}\}_{i=1}^{n}$ so that $p^{(i)}$ is the $i^{\mathrm{th}}$ closest point to the query $q$. Proofs are found in the supplementary material.

\begin{lem}
\label{lem:prob_num_extraneous_retrieved_points}
Consider points in the order they are retrieved from a composite index that consists of $m$ simple indices. The probability that there are at least $n_{0}$ points that are not the true $k$-nearest neighbours but are retrieved before some of them is at most $\frac{1}{n_{0}-k}\sum_{i=2k+1}^{n}\left(1-\frac{2}{\pi}\cos^{-1}\left(\left\Vert p^{(k)}-q\right\Vert _{2}/\left\Vert p^{(i)}-q\right\Vert _{2}\right)\right)^{m}$. 
\end{lem}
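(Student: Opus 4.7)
The plan is to translate the event ``$p^{(i)}$ is retrieved before some true $k$-NN'' into a purely geometric event about projected lengths, and then invoke Lemma~\ref{lem:order_invert} and Lemma~\ref{lem:choose_event} (equivalently, Theorem~\ref{thm:multi_order_invert}).

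The first step is to pin down the algorithmic fact that the retrieval order from a single composite index coincides with the sorted order of $\max_{j\in[m]}|\langle p^{i}-q, u_{j}\rangle|$. This follows from the priority-queue semantics: items are pulled in increasing order of $|\overline{p}_{jl}-\overline{q}_{jl}|$ across all $m$ constituent simple indices, and $p^{i}$ is added to the candidate set of the composite index precisely when the last (i.e., $m$-th) of its simple-index visits occurs, which happens at the moment corresponding to $\max_{j}|\langle p^{i}-q, u_{j}\rangle|$.

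For each $i>k$, let $E_{i}$ be the event ``$p^{(i)}$ is retrieved before some $p^{(i')}$ with $i'\leq k$,'' and let $E_{i}' := \{\max_{j}|\langle p^{(i)}-q,u_{j}\rangle| \leq \|p^{(k)}-q\|_{2}\}$. I would establish the containment $E_{i}\subseteq E_{i}'$ by the chain
\[
\max_{j}|\langle p^{(i)}-q,u_{j}\rangle| \;<\; \max_{j}|\langle p^{(i')}-q,u_{j}\rangle| \;\leq\; \|p^{(i')}-q\|_{2} \;\leq\; \|p^{(k)}-q\|_{2},
\]
where the first inequality uses the retrieval-order characterization above, the second is Cauchy--Schwarz, and the third uses the fact that $p^{(\cdot)}$ enumerates the points in order of increasing distance to $q$.

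Because there are only $k$ indices satisfying $k<i\leq 2k$, occurrence of at least $n_{0}$ of $\{E_{i}\}_{i=k+1}^{n}$ forces at least $n_{0}-k$ of $\{E_{i}\}_{i=2k+1}^{n}$ to occur, and hence at least $n_{0}-k$ of $\{E_{i}'\}_{i=2k+1}^{n}$ to occur. Applying Lemma~\ref{lem:choose_event} with $k'=n_{0}-k$ and then bounding each $\mathrm{Pr}(E_{i}')$ via Lemma~\ref{lem:order_invert} (taking $v^{l}=p^{(i)}-q$ and $v^{s}=p^{(k)}-q$, which satisfy $\|v^{l}\|_{2}>\|v^{s}\|_{2}$ for $i>k$) yields the stated bound.

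The most delicate step is the containment $E_{i}\subseteq E_{i}'$: it demands a careful accounting of the priority-queue dynamics to certify that retrieval really does happen in sorted order of $\max_{j}|\langle\cdot, u_{j}\rangle|$, and it leverages the monotonicity $\|p^{(i')}-q\|_{2}\leq\|p^{(k)}-q\|_{2}$ for $i'\leq k$ so that the algorithm-dependent identity of the ``missed'' nearest neighbour can be replaced by the single reference length $\|p^{(k)}-q\|_{2}$, which is what makes the application of Lemma~\ref{lem:order_invert} clean. After this reduction, the conclusion is a direct combination of the previously proven probabilistic lemmas.
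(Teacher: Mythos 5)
Your proposal is correct and follows essentially the same route as the paper: the paper's split of extraneous points into ``reasonable'' (ranks $k+1$ through $2k$) versus ``silly'' (ranks beyond $2k$) is exactly your counting step that at least $n_{0}-k$ of the events indexed $i\geq 2k+1$ must occur, and its invocation of Theorem~\ref{thm:multi_order_invert} is just the packaged combination of Lemma~\ref{lem:order_invert} and Lemma~\ref{lem:choose_event} that you apply directly, with the same containment of the retrieval-order event in the event $\max_{j}\left|\langle p^{(i)}-q,u_{j}\rangle\right|\leq\left\Vert p^{(k)}-q\right\Vert _{2}$.
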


\begin{lem}
\label{lem:prob_num_extraneous_visited_points}
Consider point projections in a composite index that consists of $m$ simple indices in the order they are visited. The probability that $n_{0}$ point projections that are not of the true $k$-nearest neighbours are visited before all true $k$-nearest neighbours have been retrieved is at most $\frac{m}{n_{0}-mk}\sum_{i=2k+1}^{n}\left(1-\frac{2}{\pi}\cos^{-1}\left(\left\Vert p^{(k)}-q\right\Vert _{2}/\left\Vert p^{(i)}-q\right\Vert _{2}\right)\right)$. 
\end{lem}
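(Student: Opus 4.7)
The plan is to translate ``projection visited before all true $k$-nearest neighbours are retrieved'' into a per-projection geometric event and then apply Lemma \ref{lem:choose_event}, in close analogy with the proof of Lemma \ref{lem:prob_num_extraneous_retrieved_points}. The key timing observation is that the priority queue visits projections in increasing order of $|\langle p - q, u_j\rangle|$, so a true $k$-nearest neighbour $p^{(i')}$ (with $i' \leq k$) is retrieved exactly when its last (largest-projected) of $m$ projections is visited, at effective time $\max_{j'}|\langle p^{(i')} - q, u_{j'}\rangle|$. Consequently, a projection $(i, j)$ with $i > k$ is visited before all true $k$-nearest neighbours have been retrieved if and only if
\[
|\langle p^{(i)} - q, u_j\rangle| \leq \max_{i' \leq k}\max_{j'}|\langle p^{(i')} - q, u_{j'}\rangle|.
\]
Since each $u_{j'}$ is a unit vector and $\|p^{(i')} - q\|_2 \leq \|p^{(k)} - q\|_2$ for $i' \leq k$, the right-hand side is bounded by $\|p^{(k)} - q\|_2$. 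Hence the event is contained in $E_{i,j} := \{|\langle p^{(i)} - q, u_j\rangle| \leq \|p^{(k)} - q\|_2\}$.

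I would next bound each $\mathrm{Pr}(E_{i,j})$ via the $M=1$ specialization of Lemma \ref{lem:order_invert}, applied with $v^l = p^{(i)} - q$ and a surrogate of length $\|p^{(k)} - q\|_2$, giving $\mathrm{Pr}(E_{i,j}) = 1 - \tfrac{2}{\pi}\cos^{-1}(\|p^{(k)} - q\|_2/\|p^{(i)} - q\|_2)$ for each $i > k$ and $j \in [m]$. To assemble the final bound, I would account for the $k$ extraneous points $p^{(k+1)}, \ldots, p^{(2k)}$, whose distances to $q$ can be arbitrarily close to $\|p^{(k)} - q\|_2$ and which therefore cannot be usefully handled by the above geometric bound. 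Each such point contributes at most $m$ ``free'' projection visits, so if at least $n_0$ extraneous projections have been visited, then at least $n_0 - mk$ of them must come from $\{(i, j) : i > 2k,\ j \in [m]\}$. Applying Lemma \ref{lem:choose_event} to these $m(n - 2k)$ events with threshold $k' = n_0 - mk$ and summing over $j$ (whose contribution is $j$-independent) produces the stated bound, with the factor $m$ in the numerator arising from this sum.

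The main subtlety compared to Lemma \ref{lem:prob_num_extraneous_retrieved_points} is that a single extraneous point now contributes $m$ separately-counted projection visits rather than a single retrieval, which is exactly why the trivial ``free'' count becomes $mk$ instead of $k$ and why an extra factor of $m$ appears in front of the sum. A minor concern is that the events $E_{i,j_1}$ and $E_{i,j_2}$ sharing the same $p^{(i)}$ are not independent, but Lemma \ref{lem:choose_event} is a union-bound style inequality that requires no independence, so this poses no obstacle to the argument.
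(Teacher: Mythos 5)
Your proposal is correct and follows essentially the same route as the paper: the paper's ``reasonable vs.\ silly'' split of extraneous projections is exactly your $mk$ ``free'' visits, and its invocation of Theorem~\ref{thm:multi_order_invert} with $M=1$ on the $m(n-2k)$ projection events is just the packaged form of your direct application of Lemma~\ref{lem:order_invert} followed by Lemma~\ref{lem:choose_event}, with the factor $m$ arising from the same $j$-independent sum. Your explicit timing characterization (a $k$-nearest neighbour is retrieved when its largest projection is visited, and that value is at most $\left\Vert p^{(k)}-q\right\Vert_{2}$) is a slightly more careful rendering of the step the paper states as ``maximum projected distances are no more than true distances.''
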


\begin{lem}
\label{lem:sum_dist_ratios}
On a dataset with global relative sparsity $(k,\gamma)$, the quantity $\sum_{i=2k+1}^{n}\left(1-\frac{2}{\pi}\cos^{-1}\left(\left\Vert p^{(k)}-q\right\Vert _{2}/\left\Vert p^{(i)}-q\right\Vert _{2}\right)\right)^{m}$ is at most $O\left(k\max(\log(n/k),(n/k)^{1-m\log_{2}\gamma})\right)$. 
\end{lem}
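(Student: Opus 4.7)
The plan is to first eliminate the trigonometric expression using the elementary identity $1 - \frac{2}{\pi}\cos^{-1}(x) = \frac{2}{\pi}\sin^{-1}(x)$ together with the bound $\sin^{-1}(x) \leq \frac{\pi}{2}x$ for $x \in [0,1]$; the latter follows from the convexity of $\sin^{-1}$ on $[0,1]$, so its graph lies below the chord from $(0,0)$ to $(1,\pi/2)$. Writing $r^{\ast} := \|p^{(k)}-q\|_{2}$, this reduces the target quantity to bounding the algebraic sum
\[
S \;:=\; \sum_{i=2k+1}^{n}\left(\frac{r^{\ast}}{\|p^{(i)}-q\|_{2}}\right)^{m}.
\]

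Next, I would translate global relative sparsity into a lower bound on $\|p^{(i)}-q\|_{2}$ in terms of the rank $i$. Breaking ties so that $|B_{q}(r^{\ast})|=k$, and observing that $|B_{q}(\gamma^{j}r^{\ast})|$ is monotone nondecreasing in $j$ and so never drops below $\tau=k$, the doubling hypothesis applies at every scale and gives, by induction, $|B_{q}(\gamma^{j}r^{\ast})| \leq 2^{j}k$ for all $j \geq 0$. Equivalently, for every $j \geq 1$, any index $i > 2^{j}k$ must satisfy $\|p^{(i)}-q\|_{2} > \gamma^{j}r^{\ast}$.

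Then I would partition $S$ into dyadic shells indexed by $j=1,2,\ldots,J$ with $J := \lceil \log_{2}(n/k) \rceil$: shell $j$ consists of the indices $i \in (2^{j}k,\,2^{j+1}k]$, of which there are at most $2^{j}k$, each contributing at most $\gamma^{-jm}$. This gives
\[
S \;\leq\; k\sum_{j=1}^{J}\left(\frac{2}{\gamma^{m}}\right)^{j}.
\]

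The claim then follows from a case split on the sign of $1 - m\log_{2}\gamma$. When $m\log_{2}\gamma \geq 1$ the geometric ratio is at most $1$ and each of the $O(\log(n/k))$ terms is bounded by one, yielding $S = O(k\log(n/k))$. Otherwise the sum is dominated by its last term $(2/\gamma^{m})^{J} = 2^{J}\cdot(2^{J})^{-m\log_{2}\gamma}$, and since $2^{J} = \Theta(n/k)$ this equals $\Theta((n/k)^{1-m\log_{2}\gamma})$, yielding $S = O(k(n/k)^{1-m\log_{2}\gamma})$. The main obstacle is the bookkeeping in the dyadic step: verifying that the doubling hypothesis is met at every scale (an easy monotonicity argument, but one that relies on the tie-breaking to enforce $|B_{q}(r^{\ast})| = k$), and checking that the starting index $2k+1$ of the sum coincides with the first nontrivial shell $j=1$. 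The remaining computations are routine geometric-series manipulations.
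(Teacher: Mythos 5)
Your proposal is correct and follows essentially the same route as the paper's proof: the bound $1-\tfrac{2}{\pi}\cos^{-1}(x)\leq x$ (which the paper simply asserts and you derive via $\sin^{-1}$), the recursive/inductive application of the doubling condition to get $\left\Vert p^{(i)}-q\right\Vert_{2}>\gamma^{j}\left\Vert p^{(k)}-q\right\Vert_{2}$ for $i>2^{j}k$, the dyadic-shell summation $k\sum_{j}(2/\gamma^{m})^{j}$, and the same case split on whether $\gamma^{m}\geq 2$. Your extra attention to tie-breaking at radius $r^{\ast}$ is a detail the paper glosses over, but otherwise the arguments coincide.
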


\begin{lem}
\label{lem:single_k0_failure_prob}
For a dataset with global relative sparsity $(k,\gamma)$ and a given composite index consisting of $m$ simple indices, there is some $k_{0} \in\Omega(k\max(\log(n/k),(n/k)^{1-m\log_{2}\gamma}))$ such that the probability that the candidate points retrieved from the composite index do not include some of the true $k$-nearest neighbours is at most some constant $\alpha_{0}<1$.
\end{lem}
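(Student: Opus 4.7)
The plan is to combine Lemma~\ref{lem:prob_num_extraneous_retrieved_points} with Lemma~\ref{lem:sum_dist_ratios} in a direct way, choosing $k_{0}$ large enough to drive the resulting probability bound below a preselected constant. The event that the candidate set fails to contain all of the true $k$-nearest neighbours is precisely the event that at least one true $k$-NN has not yet been retrieved by the time $k_{0}$ points have been retrieved, which, letting $n_{0}:=k_{0}-k$, is contained in the event that at least $n_{0}$ non-$k$-NN points are retrieved before some true $k$-NN. Thus it suffices to bound the probability of the latter event.

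First I would invoke Lemma~\ref{lem:prob_num_extraneous_retrieved_points} to obtain
\[
\mathrm{Pr}(\text{failure}) \;\leq\; \frac{1}{n_{0}-k}\sum_{i=2k+1}^{n}\!\left(1-\tfrac{2}{\pi}\cos^{-1}\!\left(\tfrac{\|p^{(k)}-q\|_{2}}{\|p^{(i)}-q\|_{2}}\right)\right)^{\!m}.
\]
Next I would apply Lemma~\ref{lem:sum_dist_ratios}, which, since the dataset has global relative sparsity $(k,\gamma)$, upper-bounds this sum by $C' k\max(\log(n/k),(n/k)^{1-m\log_{2}\gamma})$ for some absolute constant $C'$.

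I would then set $n_{0}=C k\max(\log(n/k),(n/k)^{1-m\log_{2}\gamma})$ for a constant $C$ to be chosen, and accordingly $k_{0}=n_{0}+k=\Theta(k\max(\log(n/k),(n/k)^{1-m\log_{2}\gamma}))$, so $k_{0}\in\Omega(k\max(\log(n/k),(n/k)^{1-m\log_{2}\gamma}))$ as required. Because $n_{0}-k\geq n_{0}/2$ whenever $C\geq 2$ (using that $k\max(\log(n/k),(n/k)^{1-m\log_{2}\gamma})\geq k$), the failure probability is at most $2C'/C$, so picking $C$ large enough in terms of $C'$ gives any desired constant $\alpha_{0}<1$.

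No single step here is particularly hard once the earlier lemmas are in place; the only mild obstacle is just bookkeeping, namely verifying that $n_{0}-k=\Theta(n_{0})$ over the full parameter range so that the division in Lemma~\ref{lem:prob_num_extraneous_retrieved_points} does not absorb the factor of $k$ we need to match, and checking that the $\Omega$-expression is stated in terms of $k_{0}$ (which differs from $n_{0}$ only by an additive $k$ that is dominated by $n_{0}$ under our choice of $C$). Since Lemma~\ref{lem:sum_dist_ratios} already contains the $\max$ that yields the two regimes of the bound, no case analysis on the relation between $m\log_{2}\gamma$ and $1$ is needed in this final step.
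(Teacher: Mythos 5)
Your proposal is correct and follows essentially the same route as the paper's proof: bound the failure event by the existence of at least $k_{0}-k$ (the paper uses $k_{0}-(k-1)$, an inconsequential off-by-one) extraneous retrieved points, apply Lemma~\ref{lem:prob_num_extraneous_retrieved_points} and then Lemma~\ref{lem:sum_dist_ratios}, and choose $k_{0}$ with a large enough constant so that the resulting ratio is a constant below 1. The bookkeeping you flag (that $n_{0}-k=\Theta(n_{0})$ under your choice of $C$) is handled the same way in the paper.
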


\begin{lem}
\label{lem:single_k1_failure_prob}
For a dataset with global relative sparsity $(k,\gamma)$ and a given composite index consisting of $m$ simple indices, there is some $k_{1} \in\Omega(mk\max(\log(n/k),(n/k)^{1-\log_{2}\gamma}))$ such that the probability that the candidate points retrieved from the composite index do not include some of the true $k$-nearest neighbours is at most some constant $\alpha_{1}<1$.
\end{lem}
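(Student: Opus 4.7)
The plan is to parallel the proof of Lemma~\ref{lem:single_k0_failure_prob}, but working with visits instead of retrievals, and therefore appealing to Lemma~\ref{lem:prob_num_extraneous_visited_points} and the $m=1$ specialization of Lemma~\ref{lem:sum_dist_ratios}. Concretely, I would first translate the failure event into a statement about the number of extraneous visits. If after $k_{1}$ total visits to the composite index some true $k$-nearest neighbour has not yet been retrieved, then each true $k$-nearest neighbour has been visited at most $m$ times, so at most $mk$ of those $k_{1}$ visits can have landed on true $k$-nearest neighbours. Hence at least $k_{1}-mk$ of the $k_{1}$ visited point projections belong to points that are not true $k$-nearest neighbours, and by construction they were all visited strictly before all true $k$-nearest neighbours had been retrieved.

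Next I would apply Lemma~\ref{lem:prob_num_extraneous_visited_points} with $n_{0}=k_{1}-mk$, which bounds the probability of the event in the previous paragraph by
\[
\frac{m}{(k_{1}-mk)-mk}\sum_{i=2k+1}^{n}\!\left(1-\tfrac{2}{\pi}\cos^{-1}\!\left(\tfrac{\|p^{(k)}-q\|_{2}}{\|p^{(i)}-q\|_{2}}\right)\right),
\]
valid provided $k_{1}>2mk$, which is anyway implied by the target scaling. Then I would apply Lemma~\ref{lem:sum_dist_ratios} with the exponent parameter set to $1$ (since the summand here appears without the $m$th power), concluding that the sum is $O\!\left(k\max(\log(n/k),(n/k)^{1-\log_{2}\gamma})\right)$.

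Combining these, the failure probability is at most a quantity of the form
\[
\frac{C\,m\,k\max(\log(n/k),(n/k)^{1-\log_{2}\gamma})}{k_{1}-2mk}
\]
for some absolute constant $C$. Choosing
\[
k_{1}\;=\;2mk+\frac{C}{\alpha_{1}}\,mk\max(\log(n/k),(n/k)^{1-\log_{2}\gamma})
\]
makes this bound at most $\alpha_{1}<1$, and this $k_{1}$ lies in $\Omega\!\left(mk\max(\log(n/k),(n/k)^{1-\log_{2}\gamma})\right)$, as required.

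The only subtlety is the reduction in the first paragraph: one has to be careful that the $mk$ term is exactly the right slack, i.e.\ to recognize that because each of the $k$ true nearest neighbours can soak up at most $m$ of the $k_{1}$ visits, the failure event really does entail $n_{0}\geq k_{1}-mk$ extraneous visits occurring before complete retrieval. Once this bookkeeping is in hand, everything else is a direct substitution using the previously established lemmas.
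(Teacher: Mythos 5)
Your proposal is correct and follows essentially the same route as the paper: count the extraneous visited projections (the paper allows at most $mk-1$ rather than $mk$ true-positive visits, yielding a denominator of $k_{1}-2mk+1$ instead of your $k_{1}-2mk$, an immaterial difference), apply Lemma~\ref{lem:prob_num_extraneous_visited_points} with that choice of $n_{0}$, bound the resulting sum via Lemma~\ref{lem:sum_dist_ratios} with exponent $1$, and choose $k_{1}\in\Omega(mk\max(\log(n/k),(n/k)^{1-\log_{2}\gamma}))$ to drive the bound below $1$.
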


\begin{thm}
\label{thm:data_indep_alg_correctness}
For a dataset with global relative sparsity $(k,\gamma)$, for any $\epsilon > 0$, there is some $L$, $k_{0} \in \Omega(k\max(\log(n/k),(n/k)^{1-m\log_{2}\gamma}))$ and $k_{1} \in \Omega(mk\max(\log(n/k),(n/k)^{1-\log_{2}\gamma}))$ such that the algorithm returns the correct set of $k$-nearest neighbours with probability of at least $1 - \epsilon$. 
\end{thm}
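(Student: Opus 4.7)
The plan is to prove correctness via a two-step amplification: first bound the failure probability of a single composite index by a constant strictly less than $1$, then reduce the overall failure probability below $\epsilon$ by taking the union of candidate sets from $L$ independently constructed composite indices.

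For the single-index bound, I would observe that processing of composite index $l$ terminates either when $|S_l|$ reaches $k_0$ (the retrieval budget) or when the outer loop counter reaches $k_1-1$ (the visit budget). Hence index $l$ can fail only if at least one of the following holds: (i) among the first $k_0$ points retrieved from index $l$, some true $k$-nearest neighbour is missing; or (ii) among the first $k_1-1$ points visited in index $l$, some true $k$-nearest neighbour has not yet been retrieved. Lemma~\ref{lem:single_k0_failure_prob} bounds the probability of (i) by some $\alpha_0 < 1$ when $k_0 \in \Omega(k\max(\log(n/k),(n/k)^{1-m\log_{2}\gamma}))$, and Lemma~\ref{lem:single_k1_failure_prob} bounds the probability of (ii) by some $\alpha_1 < 1$ when $k_1 \in \Omega(mk\max(\log(n/k),(n/k)^{1-\log_{2}\gamma}))$. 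Inspecting those proofs (which ultimately apply Markov-type bounds from Lemmas~\ref{lem:prob_num_extraneous_retrieved_points} and~\ref{lem:prob_num_extraneous_visited_points} together with Lemma~\ref{lem:sum_dist_ratios}) shows that $\alpha_0$ and $\alpha_1$ decrease as the hidden constants in $k_0$ and $k_1$ grow. Thus, by taking these constants sufficiently large---while staying within the claimed $\Omega$-classes---one can force $\alpha_0 + \alpha_1 \leq 1/2$, and a union bound then gives a per-index failure probability of at most $1/2$.

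For the amplification step, I would exploit the fact that the random projection directions $\{u_{jl}\}_{j \in [m]}$ for different values of $l$ are sampled independently, so the failure events of the $L$ composite indices are mutually independent. Since the algorithm returns the $k$ closest points of $\bigcup_{l \in [L]} S_l$ under true Euclidean distance, it succeeds whenever at least one $S_l$ contains the full set of true $k$-nearest neighbours. Therefore the overall failure probability is at most $(1/2)^L$, and choosing $L \geq \lceil \log_{2}(1/\epsilon) \rceil$ yields an overall failure probability of at most $\epsilon$.

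The subtle point I would handle carefully is the derivation of the per-index bound: events (i) and (ii) as bounded by Lemmas~\ref{lem:single_k0_failure_prob} and~\ref{lem:single_k1_failure_prob} refer to the ``pure'' $k_0$-retrieval and $k_1$-visit processes in isolation, whereas the actual algorithm interleaves the two stopping rules. I would therefore verify that any failure of the interleaved process is contained in the failure event of whichever pure process is triggered first---which is immediate because stopping earlier can only restrict the set of retrieved points---so that the union bound on (i) and (ii) is legitimate. Beyond this, the rest of the argument is a standard success-amplification, and the stated $\Omega$-rates for $k_0$, $k_1$, and $L$ fall out directly from the cited lemmas.
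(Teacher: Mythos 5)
Your proposal is correct and follows essentially the same two-step structure as the paper's proof: bound the failure probability of a single composite index by a constant strictly below $1$ using Lemmas~\ref{lem:single_k0_failure_prob} and~\ref{lem:single_k1_failure_prob}, then amplify over $L$ independently constructed composite indices and pick $L$ large enough that the product of per-index failure probabilities drops below $\epsilon$. The one place you diverge is in how the two lemmas are combined: the paper treats both as upper bounds on the same failure event and takes $\min\{\alpha_0,\alpha_1\}$, whereas you decompose the failure of the interleaved process into the two events corresponding to whichever stopping rule fires first and apply a union bound, giving $\alpha_0+\alpha_1$, which you then force below $1$ by enlarging the hidden constants in $k_0$ and $k_1$ (legitimate, since the underlying bounds are Markov-type and shrink as those constants grow). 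Your version is in fact the more defensible one: Lemma~\ref{lem:single_k0_failure_prob} analyzes the first $k_0$ retrieved points in isolation and Lemma~\ref{lem:single_k1_failure_prob} the first $k_1$ visited projections in isolation, and the actual failure event is contained in the union of these two events rather than in each one separately, so the paper's $\min$ is not strictly justified while your sum is. The cost of your refinement is only a constant factor in $k_0$, $k_1$, and $L$, which does not affect the stated $\Omega$-rates, so the conclusion is unchanged.
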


Now that we have found a choice of $k_{0}$ and $k_{1}$ that suffices to ensure correctness with high probability, we can derive a bound on the query time that guarantees correctness. We then analyze the time complexity for construction, insertion and deletion and the space complexity. Proofs of the following are found in the supplementary material. 

\begin{thm}
\label{thm:data_indep_alg_query_time_complexity}
For a given number of simple indices $m$, the algorithm takes $O\left(dk\max(\log(n/k),(n/k)^{1-m/d'})+\right.$ $\left.mk\log m\left(\max(\log(n/k),(n/k)^{1-1/d'})\right)\right)$ time to retrieve the $k$-nearest neighbours at query time, where $d'$ denotes the intrinsic dimensionality. 
\end{thm}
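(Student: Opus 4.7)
The plan is to combine the choices of $k_0$ and $k_1$ provided by Lemmas~\ref{lem:single_k0_failure_prob} and \ref{lem:single_k1_failure_prob} with a direct accounting of the work performed by Algorithm~\ref{alg_query}. By Theorem~\ref{thm:data_indep_alg_correctness}, taking $k_0 \in \Omega(k\max(\log(n/k),(n/k)^{1-m\log_2\gamma}))$ and $k_1 \in \Omega(mk\max(\log(n/k),(n/k)^{1-\log_2\gamma}))$ over $L$ composite indices suffices for correctness with probability at least $1-\epsilon$; since $d' = 1/\log_2\gamma$, the exponents become $1-m/d'$ and $1-1/d'$ respectively, and $L$ depends only on $\epsilon$ and so can be absorbed as a constant.

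I would split the query cost into three contributions. First, the initialization phase computes $mL$ query projections at $O(d)$ each, performs a BST/skip-list lookup costing $O(\log n)$ in each of the $mL$ simple indices to find the node whose key is nearest to $\overline{q}_{jl}$, and pushes $m$ initial entries into each of the $L$ priority queues at $O(\log m)$ per push; this contributes $O(mL(d+\log n))$ and is dominated by the terms below. Second, the main retrieval loop performs at most $k_1$ iterations per composite index, and each iteration executes one extract-max and one insert on a priority queue of size at most $m$, both $O(\log m)$, plus an advance of the pointer in the appropriate $T_{jl}$ to the next-closest key, which is amortized $O(1)$ using maintained successor/predecessor pointers. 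Across all $L$ composite indices this contributes $O(Lk_1\log m) = O(mk\log m\cdot\max(\log(n/k),(n/k)^{1-1/d'}))$. Third, every candidate added to some $S_l$ triggers one $O(d)$ true-distance evaluation, and at most $O(Lk_0)$ candidates are produced; selecting the top $k$ from the resulting set by Quickselect adds only $O(Lk_0)$ more, so this contributes $O(Lk_0 d) = O(dk\cdot\max(\log(n/k),(n/k)^{1-m/d'}))$.

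Summing the three contributions and absorbing the constant $L$ yields the claimed bound. The only accounting step requiring care is the amortized $O(1)$ cost of repeatedly advancing to the next closest key in each simple index: this follows from the standard observation that, if one maintains one pointer walking left of the query insertion position and one walking right within each $T_{jl}$, each advance is a single successor or predecessor step whose cost amortizes to $O(1)$ across the entire query, so the total pointer-traversal work is $O(k_1)$ and is subsumed by the priority-queue overhead. All other bounds are immediate, so this bookkeeping is the main, but still modest, obstacle.
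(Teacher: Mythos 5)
Your proposal is correct and follows essentially the same accounting as the paper's proof: initialization (projections plus BST lookups), a per-iteration $O(\log m)$ priority-queue cost over at most $k_1$ visited projections giving the $mk\log m\max(\log(n/k),(n/k)^{1-1/d'})$ term, and $O(d)$ distance evaluations over at most $k_0$ candidates plus quickselect giving the $dk\max(\log(n/k),(n/k)^{1-m/d'})$ term, with $L$ absorbed as a constant and $\log_2\gamma = 1/d'$ substituted at the end. Your explicit justification of the amortized $O(1)$ successor/predecessor traversal is a detail the paper leaves implicit, but it does not change the argument.
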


\begin{thm}
\label{thm:data_indep_alg_construction_time_complexity}
For a given number of simple indices $m$, the algorithm takes $O(m(dn+n\log n))$ time to preprocess the data points in $D$ at construction time. 
\end{thm}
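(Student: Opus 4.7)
The plan is to prove Theorem~\ref{thm:data_indep_alg_construction_time_complexity} by direct inspection of Algorithm~\ref{alg_construct}, since the construction procedure is deterministic in its control flow (aside from the initial draw of random projection vectors) and involves no adaptive geometric reasoning. None of the probabilistic lemmas from the preceding analysis are needed here; the entire argument reduces to accounting for the cost of the nested loops.

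First I would account for the one-time setup costs: drawing $mL$ unit vectors in $\mathbb{R}^{d}$, which can be done in $O(d)$ time per vector (e.g., by sampling $d$ independent Gaussian coordinates and normalizing), contributing $O(mLd)$ in total; and allocating $mL$ empty balanced binary search trees or skip lists, which is $O(mL)$. Both are dominated by the main loop whenever $n \geq 1$.

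Next I would bound the triply nested loop, which runs for $mLn$ iterations. In each iteration the algorithm performs two dominant operations: (i) computing the one-dimensional projection $\overline{p}^{i}_{jl} = \langle p^{i}, u_{jl}\rangle$, which is an inner product in $\mathbb{R}^{d}$ and takes $O(d)$ time; and (ii) inserting the key-value pair $(\overline{p}^{i}_{jl}, i)$ into $T_{jl}$, which stores at most $n$ elements and therefore admits insertion in $O(\log n)$ time (worst-case for balanced BSTs, expected for skip lists). Summing the per-iteration cost of $O(d + \log n)$ over all $mLn$ iterations yields $O(mL(dn + n\log n))$, and absorbing $L$ into the hidden constant (as is the convention used elsewhere in the paper, cf.\ Table~\ref{tab:analysis_results}) gives the claimed $O(m(dn + n\log n))$.

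There is no substantive obstacle; the entire result follows by reading off the running time from the algorithm's nested loops. The only minor caveat to flag is that the $O(\log n)$ insertion bound holds in expectation rather than worst-case if skip lists are chosen as the underlying ordered container, but since the data structure is already randomized over the projection directions, this in-expectation qualifier does not weaken the asymptotic claim.
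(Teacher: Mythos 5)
Your proposal is correct and follows essentially the same argument as the paper's proof: $O(dmn)$ for computing the projections and $O(mn\log n)$ for the insertions into the $mL$ ordered containers, with $L$ treated as a constant. The extra bookkeeping you include (the cost of drawing the random vectors and initializing the trees, and the expected-versus-worst-case caveat for skip lists) is harmless detail that the paper omits.
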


\begin{thm}
\label{thm:data_indep_alg_update_time_complexity}
The algorithm requires $O(m(d+\log n))$ time to insert a new data point and $O(m \log n)$ time to delete a data point. 
\end{thm}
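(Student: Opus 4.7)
The proof will be a straightforward accounting of the per-structure cost of updating the data structure produced by Algorithm~\ref{alg_construct}, which consists of $mL$ self-balancing BSTs (or skip lists) $T_{jl}$, each holding $n$ (key, value) pairs with the key being a one-dimensional projection. Throughout, $L$ is a constant (fixed by Theorem~\ref{thm:data_indep_alg_correctness} to make the failure probability at most $\epsilon$), so factors of $L$ are absorbed into the big-$O$.

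For insertion of a new point $p^{n+1} \in \mathbb{R}^d$, I would proceed in two steps per simple index. First, for each of the $mL$ pairs $(j,l)$, compute the projection $\overline{p}^{n+1}_{jl} = \langle p^{n+1}, u_{jl}\rangle$; this is a single inner product in $\mathbb{R}^d$ and takes $O(d)$ time. Second, insert the pair $(\overline{p}^{n+1}_{jl}, n+1)$ into $T_{jl}$; by the standard guarantees for self-balancing BSTs or skip lists, this costs $O(\log n)$. Summing over all $mL$ indices yields $O(mL(d+\log n)) = O(m(d+\log n))$.

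For deletion of a point $p^{i}$ already present in the structure, no projection computation is required. For each $(j,l)$, locate the node with value $i$ in $T_{jl}$ and remove it in $O(\log n)$ time. To avoid searching by key (which would also be $O(\log n)$ but requires the stored key), the implementation can maintain, at insertion time and without affecting the asymptotic insertion cost, an auxiliary table of pointers from the data index $i$ to its node in each of the $mL$ trees; removal by pointer plus rebalancing is $O(\log n)$. Summing, deletion costs $O(mL \log n) = O(m \log n)$.

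There is no substantive mathematical obstacle; the only thing to be careful about is the treatment of $L$ as a constant (consistent with the other complexity statements in Table~\ref{tab:analysis_results}) and the implementation detail that supports $O(\log n)$ deletion per tree. With these in place, the two bounds follow immediately from the per-operation costs of the underlying ordered dictionary data structure.
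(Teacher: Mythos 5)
Your proof is correct and follows essentially the same route as the paper's: compute $O(d)$ projections and perform $O(\log n)$ tree insertions for each of the $mL$ simple indices (with $L$ constant), and perform $O(\log n)$ deletions per tree for removal. The extra detail about an auxiliary pointer table for locating nodes during deletion is a reasonable implementation note but not needed beyond what the paper's one-line argument already assumes.
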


\begin{thm}
\label{thm:data_indep_alg_space_complexity}
The algorithm requires $O(mn)$ space in addition to the space used to store the data. 
\end{thm}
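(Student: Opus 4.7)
My plan is to simply account for every object the construction procedure (Algorithm~\ref{alg_construct}) allocates and check that each contributes $O(mn)$ space, once $L$ has been fixed to a constant by Theorem~\ref{thm:data_indep_alg_correctness}.

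The data structure returned by \textsc{Construct} is the collection $\{(T_{jl},u_{jl})\}_{j\in[m],\,l\in[L]}$, so there are exactly $mL$ simple indices. I would first argue that each $T_{jl}$ uses $O(n)$ space: a self-balancing binary search tree (or skip list) storing $n$ key--value pairs requires space linear in the number of elements, where each pair consists of a scalar projection $\overline{p}^{\,i}_{jl}\in\mathbb{R}$ and an integer identifier $i$, both of constant size. Summed over the $mL$ trees, this contributes $O(mLn)$. Second, the $mL$ random projection directions $u_{jl}\in\mathbb{R}^{d}$ contribute $O(mLd)$ space. No other persistent storage is allocated by the construction procedure. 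Thus the total additional space is $O(mL(n+d))$.

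Next I would invoke Theorem~\ref{thm:data_indep_alg_correctness}, which guarantees the existence of a value of $L$ that is a function only of the desired failure probability $\epsilon$ (and not of $n$, $d$, or the data), so $L=O(1)$ once $\epsilon$ is fixed. Since the problem statement excludes the $\Theta(dn)$ cost of storing the raw data points themselves, and since the $O(mLd)$ term for the projection vectors is dominated by (or can be charged against) the data-storage budget once one observes $mLd \le mLn$ in any non-degenerate regime $d \le n$, the remaining additional space is $O(mn)$, which is the bound claimed.

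The only subtlety, and the one I would be most careful about in writing the final proof, is making the accounting for $L$ and for the projection vectors unambiguous: the table and theorem statement suppress the $L$ factor by absorbing it into the constant, and they drop the $O(md)$ term for the $u_{jl}$'s as part of ``space used to store the data.'' Aside from this bookkeeping, the result is an immediate consequence of the linear-space guarantees of balanced BSTs or skip lists and does not require any probabilistic argument beyond what is already supplied by Theorem~\ref{thm:data_indep_alg_correctness}.
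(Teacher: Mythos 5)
Your proposal is correct and follows essentially the same argument as the paper: the $mL$ binary search trees or skip lists each store $n$ constant-size entries, and with $L$ treated as a constant this gives $O(mn)$. Your extra bookkeeping for the $O(mLd)$ space of the projection vectors $u_{jl}$ is a minor refinement the paper silently omits, but it does not change the route.
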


\section{Experiments}

\begin{figure*}[t]
    \centering
    \subfloat[]{
        \includegraphics[width=0.33\textwidth]{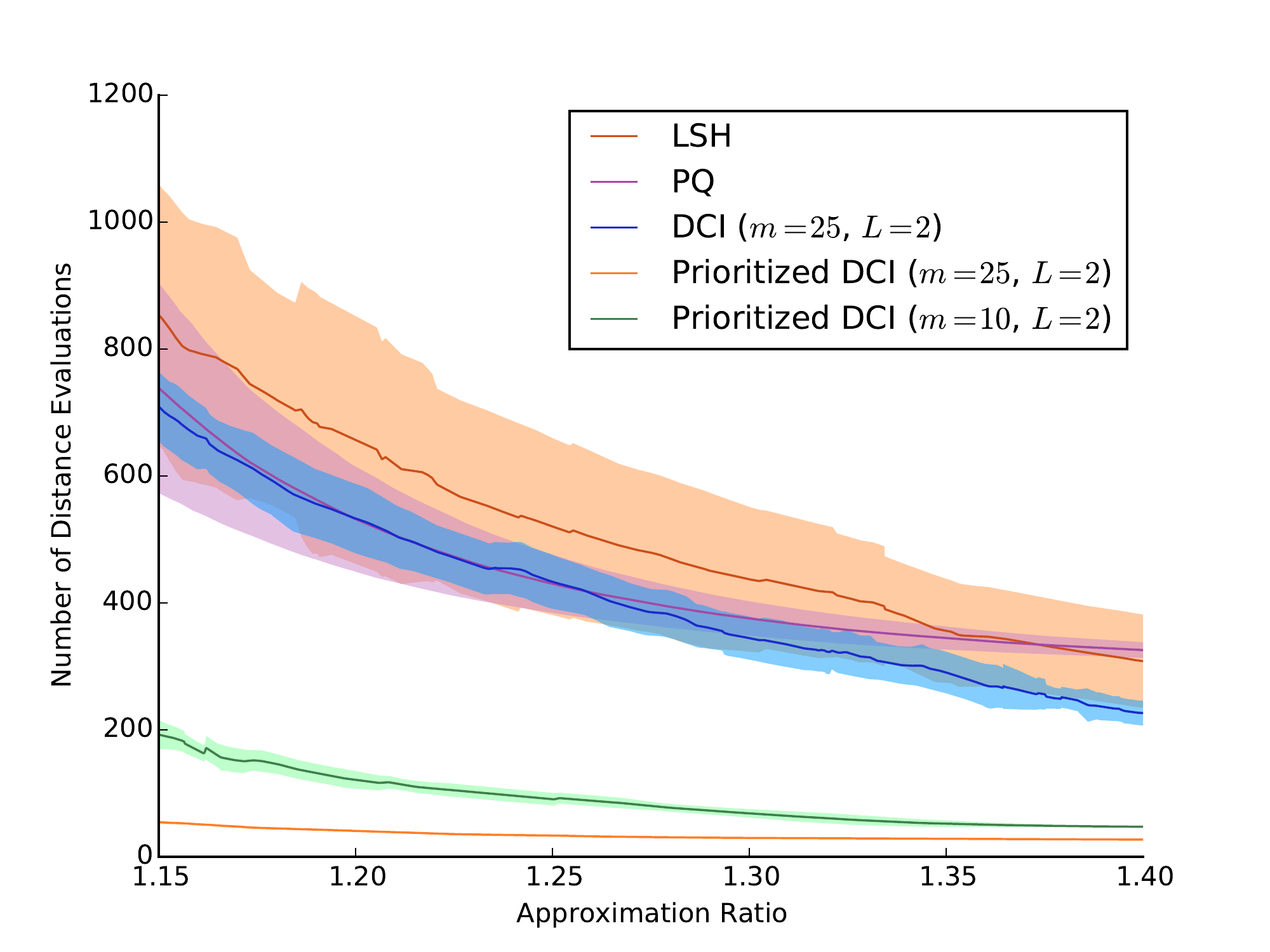}
        \label{fig:dist_evals_cifar}
    }
    \subfloat[]{
        \includegraphics[width=0.33\textwidth]{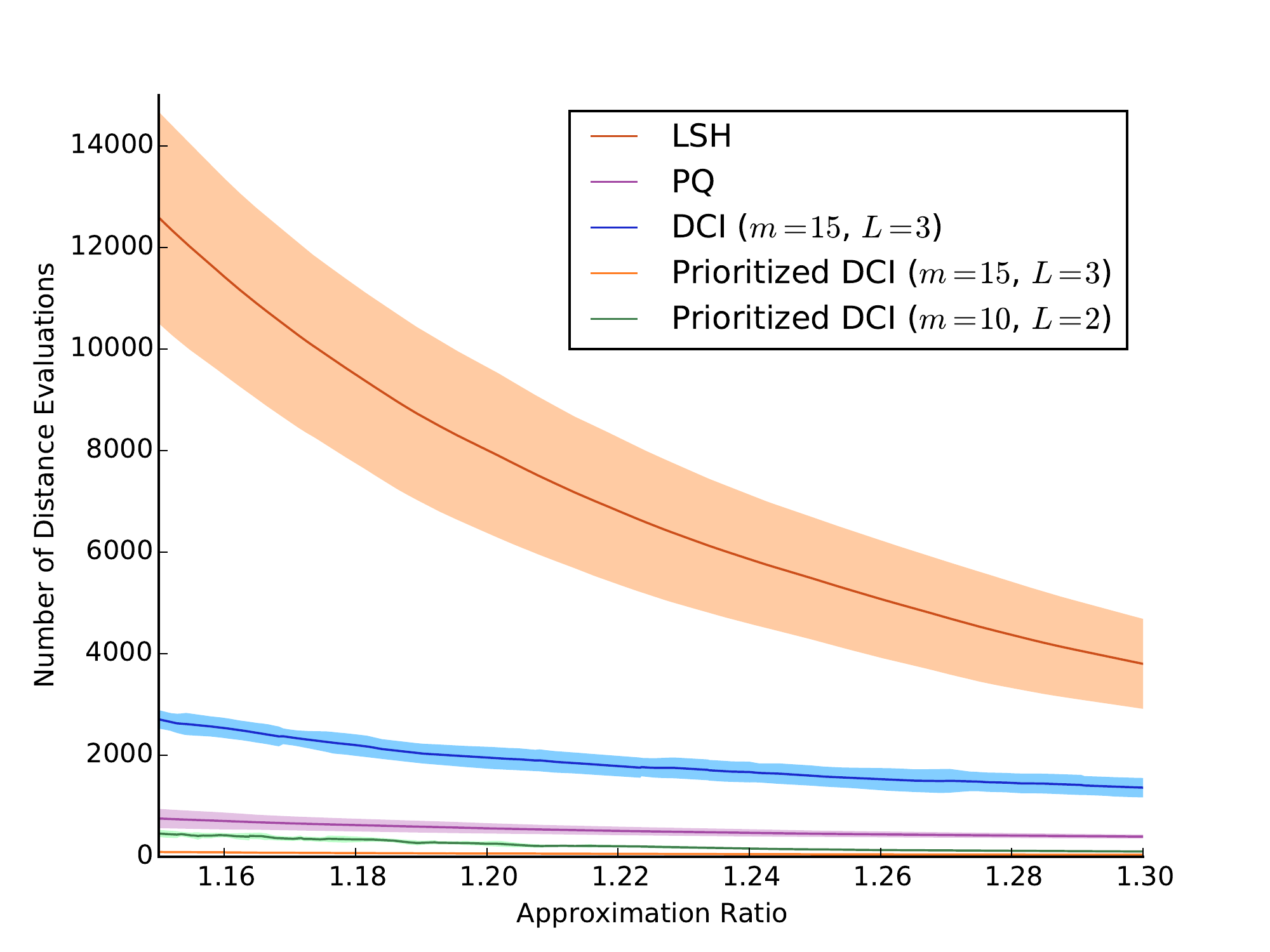}
        \label{fig:dist_evals_mnist}
    }
    \subfloat[]{
        \includegraphics[width=0.33\textwidth]{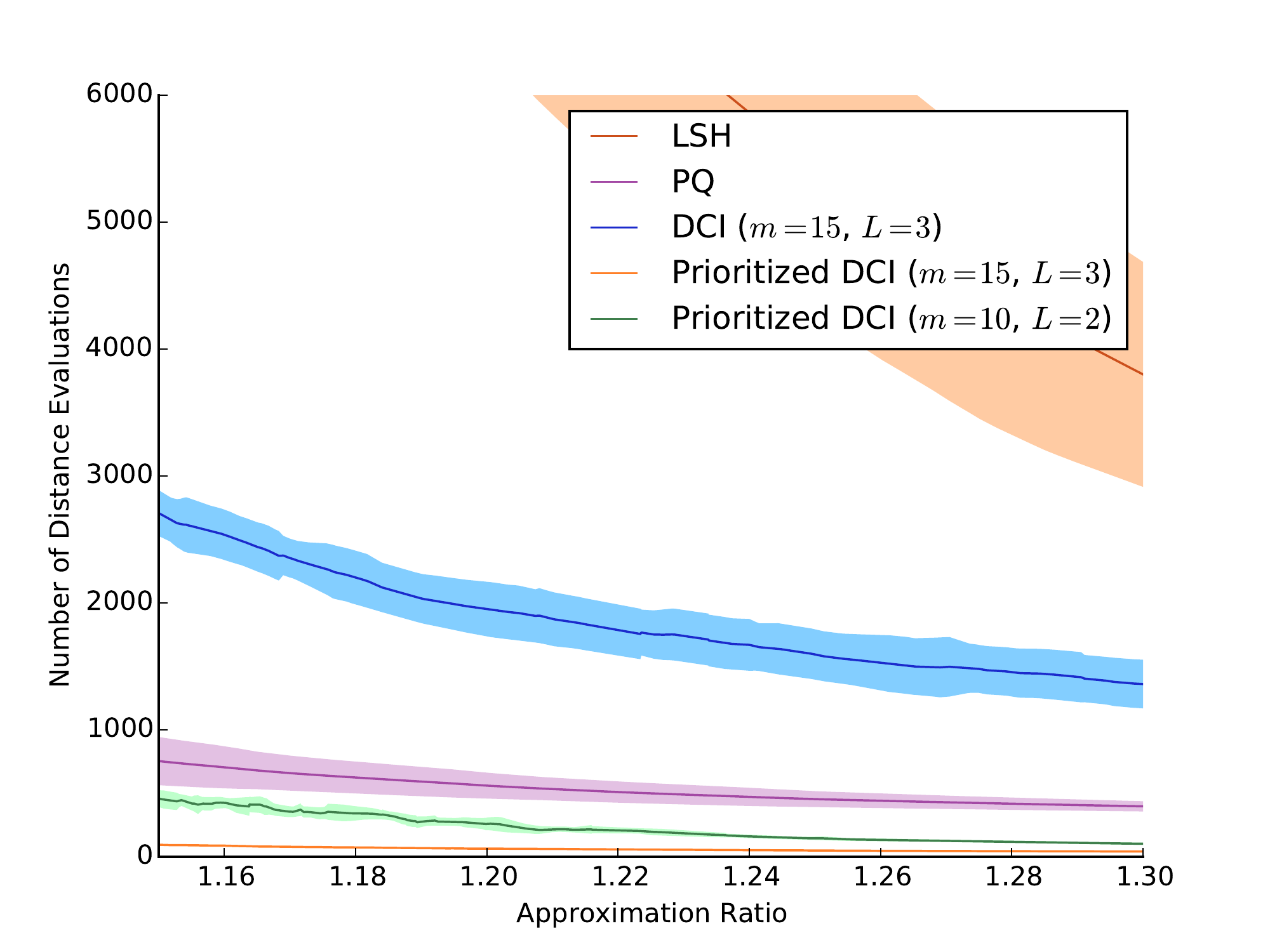}
        \label{fig:dist_evals_mnist_zoomed_in}
    }
    \caption{\label{fig:dist_evals}Comparison of the number of distance evaluations needed by different algorithms to achieve varying levels of approximation quality on (a) CIFAR-100 and (b,c) MNIST. Each curve represents the mean over ten folds and the shaded area represents $\pm$1 standard deviation. Lower values are better. (c) Close-up view of the figure in (b). }
\end{figure*}

We compare the performance of Prioritized DCI to that of standard DCI~\cite{Li2016FastKN}, product quantization~\cite{jegou2011product} and LSH~\cite{datar2004locality}, which is perhaps the algorithm that is most widely used in high-dimensional settings. Because LSH operates under the approximate setting, in which the performance metric of interest is how close the returned points are to the query rather than whether they are the true $k$-nearest neighbours. All algorithms are evaluated in terms of the time they would need to achieve varying levels of approximation quality. 

Evaluation is performed on two datasets, CIFAR-100~\cite{krizhevsky2009learning} and MNIST~\cite{lecun1998gradient}. CIFAR-100 consists of $60,000$ colour images of 100 types of objects in natural scenes and MNIST consists of $70,000$ grayscale images of handwritten digits. The images in CIFAR-100 have a size of $32 \times 32$ and three colour channels, and the images in MNIST have a size of $28 \times 28$ and a single colour channel. We reshape each image into a vector whose entries represent pixel intensities at different locations and colour channels in the image. So, each vector has a dimensionality of $32 \times 32 \times 3 = 3072$ for CIFAR-100 and $28 \times 28 = 784$ for MNIST. Note that the dimensionalities under consideration are much higher than those typically used to evaluate prior methods. 

For the purposes of nearest neighbour search, MNIST is a more challenging dataset than CIFAR-100. This is because images in MNIST are concentrated around a few modes; consequently, data points form dense clusters, leading to higher intrinsic dimensionality. On the other hand, images in CIFAR-100 are more diverse, and so data points are more dispersed in space. Intuitively, it is much harder to find the closest digit to a query among 6999 other digits of the same category that are all plausible near neighbours than to find the most similar natural image among a few other natural images with similar appearance. Later results show that all algorithms need fewer distance evaluations to achieve the same level of approximation quality on CIFAR-100 than on MNIST. 

We evaluate performance of all algorithms using cross-validation, where we randomly choose ten different splits of query vs. data points. Each split consists of 100 points from the dataset that serve as queries, with the remainder designated as data points. We use each algorithm to retrieve the 25 nearest neighbours at varying levels of approximation quality and report mean performance and standard deviation over all splits. 

Approximation quality is measured using the approximation ratio, which is defined to be the ratio of the radius of the ball containing the set of true $k$-nearest neighbours to the radius of the ball containing the set of approximate $k$-nearest neighbours returned by the algorithm. The closer the approximation ratio is to 1, the higher the approximation quality. In high dimensions, the time taken to compute true distances between the query and the candidate points dominates query time, so the number of distance evaluations can be used as an implementation-independent proxy for the query time. 

For LSH, we used $24$ hashes per table and $100$ tables, which we found to achieve the best approximation quality given the memory constraints. For product quantization, we used a data-independent codebook with $256$ entries so that the algorithm supports dynamic updates. For standard DCI, we used the same hyparameter settings used in \citep{Li2016FastKN} ($m = 25$ and $L = 2$ on CIFAR-100 and $m = 15$ and $L = 3$ on MNIST). For Prioritized DCI, we used two different settings: one that matches the hyperparameter settings of standard DCI, and another that uses less space ($m = 10$ and $L = 2$ on both CIFAR-100 and MNIST). 

We plot the number of distance evaluations that each algorithm requires to achieve each desired level of approximation ratio in Figure~\ref{fig:dist_evals}. As shown, on CIFAR-100, under the same hyperparameter setting used by standard DCI, Prioritized DCI requires $87.2\%$ to $92.5\%$ fewer distance evaluations than standard DCI, $91.7\%$ to $92.8\%$ fewer distance evaluations than product quantization, and $90.9\%$ to $93.8\%$ fewer distance evaluations than LSH to achieve same levels approximation quality, which represents a 14-fold reduction in the number of distance evaluations relative to LSH on average. Under the more space-efficient hyperparameter setting, Prioritized DCI achieves a 6-fold reduction compared to LSH. On MNIST, under the same hyperparameter setting used by standard DCI, Prioritized DCI requires $96.4\%$ to $97.0\%$ fewer distance evaluations than standard DCI, $87.1\%$ to $89.8\%$ fewer distance evaluations than product quantization, and $98.8\%$ to $99.3\%$ fewer distance evaluations than LSH, which represents a 116-fold reduction relative to LSH on average. Under the more space-efficient hyperparameter setting, Prioritized DCI achieves a 32-fold reduction compared to LSH. 

We compare the space efficiency of Prioritized DCI to that of standard DCI and LSH. As shown in Figure~\ref{fig:mem_comp} in the supplementary material, compared to LSH, Prioritized DCI uses $95.5\%$ less space on CIFAR-100 and $95.3\%$ less space on MNIST under the same hyperparameter settings used by standard DCI. This represents a 22-fold reduction in memory consumption on CIFAR-100 and a 21-fold reduction on MNIST. Under the more space-efficient hyperparameter setting, Prioritized DCI uses $98.2\%$ less space on CIFAR-100 and $97.9\%$ less space on MNIST relative to LSH, which represents a 55-fold reduction on CIFAR-100 and a 48-fold reduction on MNIST. 

In terms of wall-clock time, our implementation of Prioritized DCI takes $1.18$ seconds to construct the data structure and execute 100 queries on MNIST, compared to $104.71$ seconds taken by LSH. 

\section{Conclusion}

In this paper, we presented a new exact randomized algorithm for $k$-nearest neighbour search, which we refer to as Prioritized DCI. We showed that Prioritized DCI achieves a significant improvement in terms of the dependence of query time complexity on intrinsic dimensionality compared to standard DCI. Specifically, Prioritized DCI can to a large extent counteract a \emph{linear} increase in the intrinsic dimensionality, or equivalently, an \emph{exponential} increase in the number of points near a query, using just a \emph{linear} increase in the number of simple indices. Empirical results validated the effectiveness of Prioritized DCI in practice, demonstrating the advantages of Prioritized DCI over prior methods in terms of speed and memory usage. 

\paragraph{Acknowledgements.} This work was supported by DARPA W911NF-16-1-0552. Ke Li thanks the Natural Sciences and Engineering Research Council of Canada (NSERC) for fellowship support. 
\bibliography{pdci}
\bibliographystyle{icml2017}

\newpage

\twocolumn[
\icmltitle{Fast $k$-Nearest Neighbour Search via Prioritized DCI \\ \vspace{5pt} \large Supplementary Material}

\icmlsetsymbol{equal}{*}

\begin{icmlauthorlist}
\icmlauthor{Ke Li}{berkeley}
\icmlauthor{Jitendra Malik}{berkeley}
\end{icmlauthorlist}

\icmlaffiliation{berkeley}{University of California, Berkeley, CA 94720, United States}

\icmlcorrespondingauthor{Ke Li}{ke.li@eecs.berkeley.edu}

\vskip 0.3in
]

\section{Analysis}

We present proofs that were omitted from the main paper below. 

\begin{customthm}{1}
Let $\left\{ v^{l}_{i}\right\} _{i=1}^{N}$ and $\left\{ v^{s}_{i'} \vphantom{v^{l}_{i}} \right\} _{i'=1}^{N'}$ be sets of vectors such that $\left\Vert v^{l}_{i}\right\Vert _{2} > \left\Vert \vphantom{v^{l}} v^{s}_{i'}\right\Vert _{2}\;\forall i\in[N],i'\in[N']$. Furthermore, let $\left\{ u_{ij}'\right\} _{i\in[N],j\in[M]}$ be random uniformly distributed unit vectors such that $u_{i1}',\ldots,u_{iM}'$ are independent for any given $i$. Consider the events $\left\{ \exists v_{i'}^{s}\mbox{ s.t. }\max_{j}\left\{ \left|\langle v_{i}^{l},u_{ij}'\rangle\right|\right\} \leq\left\Vert v_{i'}^{s}\vphantom{v^{l}}\right\Vert _{2}\right\} _{i=1}^{N}$. The probability that at least $k'$ of these events occur is at most $\frac{1}{k'}\sum_{i=1}^{N}\left(1-\frac{2}{\pi}\cos^{-1}\left(\left\Vert v^{s}_{\mathrm{max}}\vphantom{v^{l}}\right\Vert _{2}/\left\Vert v_{i}^{l}\right\Vert _{2}\right)\right)^{M}$, where $\left\Vert v^{s}_{\mathrm{max}}\vphantom{v^{l}}\right\Vert _{2} = \max_{i'}\left\{ \left\Vert v_{i'}^{s}\vphantom{v^{l}}\right\Vert _{2}\right\} $. Furthermore, if $k' = N$, it is at most $\min_{i\in[N]}\left\{ \left(1-\frac{2}{\pi}\cos^{-1}\left(\left\Vert v^{s}_{\mathrm{max}}\vphantom{v^{l}}\right\Vert _{2}/\left\Vert v_{i}^{l}\right\Vert _{2}\right)\right)^{M} \right\}$. 
\end{customthm}

\begin{proof}
The event that $\exists v_{i'}^{s}\mbox{ s.t. }\max_{j}\left\{ \left|\langle v_{i}^{l},u_{ij}'\rangle\right|\right\} \leq\left\Vert v_{i'}^{s}\vphantom{v^{l}}\right\Vert _{2}$ is equivalent to the event that $\max_{j}\left\{ \left|\langle v_{i}^{l},u_{ij}'\rangle\right|\right\} \leq\max_{i'}\left\{ \left\Vert v_{i'}^{s}\vphantom{v^{l}}\right\Vert _{2}\right\} =\left\Vert v_{\mathrm{max}}^{s}\vphantom{v^{l}}\right\Vert _{2}$. Take $E_{i}$ to be the event that $\max_{j}\left\{ \left|\langle v_{i}^{l},u_{ij}'\rangle\right|\right\} \leq\left\Vert v_{\mathrm{max}}^{s}\vphantom{v^{l}}\right\Vert _{2}$. By Lemma~\ref{lem:order_invert}, $\mathrm{Pr}(E_{i})\leq\left(1-\frac{2}{\pi}\cos^{-1}\left(\left\Vert v^{s}_{\mathrm{max}}\vphantom{v^{l}}\right\Vert _{2}/\left\Vert v_{i}^{l}\right\Vert _{2}\right)\right)^{M}$. It follows from Lemma~\ref{lem:choose_event} that the probability that $k'$ of $E_{i}$'s occur is at most $\frac{1}{k'}\sum_{i=1}^{N}\mathrm{Pr}\left(E_{i}\right)\leq\frac{1}{k'}\sum_{i=1}^{N}\left(1-\frac{2}{\pi}\cos^{-1}\left(\left\Vert v^{s}_{\mathrm{max}}\vphantom{v^{l}}\right\Vert _{2}/\left\Vert v_{i}^{l}\right\Vert _{2}\right)\right)^{M}$. If $k'=N$, we use the fact that $\bigcap_{i'=1}^{N}E_{i'}\subseteq E_{i}\;\forall i$, which implies that $\mathrm{Pr}\left(\bigcap_{i'=1}^{N}E_{i'}\right)\leq\min_{i\in[N]}\mathrm{Pr}\left(E_{i}\right)\leq\min_{i\in[N]}\left\{ \left(1-\frac{2}{\pi}\cos^{-1}\left(\left\Vert v^{s}_{\mathrm{max}}\vphantom{v^{l}}\right\Vert _{2}/\left\Vert v_{i}^{l}\right\Vert _{2}\right)\right)^{M}\right\}$.  
\end{proof}

\begin{customlem}{3}
Consider points in the order they are retrieved from a composite index that consists of $m$ simple indices. The probability that there are at least $n_{0}$ points that are not the true $k$-nearest neighbours but are retrieved before some of them is at most $\frac{1}{n_{0}-k}\sum_{i=2k+1}^{n}\left(1-\frac{2}{\pi}\cos^{-1}\left(\left\Vert p^{(k)}-q\right\Vert _{2}/\left\Vert p^{(i)}-q\right\Vert _{2}\right)\right)^{m}$. 
\end{customlem}

\begin{proof}
Points that are not the true $k$-nearest neighbours but are retrieved before some of them will be referred to as \emph{extraneous points} and are divided into two categories: \emph{reasonable} and \emph{silly}. An extraneous point is reasonable if it is one of the $2k$-nearest neighbours, and is silly otherwise. For there to be $n_{0}$ extraneous points, there must be $n_{0} - k$ silly extraneous points. Therefore, the probability that there are $n_{0}$ extraneous points is upper bounded by the probability that there are $n_{0} - k$ silly extraneous points. 

Since points are retrieved from the composite index in the order of increasing maximum projected distance to the query, for any pair of points $p$ and $p'$, if $p$ is retrieved before $p'$, then $\max_{j}\left\{ \left|\langle p-q,u_{jl}\rangle\right|\right\} \leq\max_{j}\left\{ \left|\langle p'-q,u_{jl}\rangle\right|\right\}$, where $\left\{ u_{jl}\right\} _{j=1}^{m}$ are the projection directions associated with the constituent simple indices of the composite index. 

By Theorem~\ref{thm:multi_order_invert}, if we take $\left\{ v_{i}^{l}\right\} _{i=1}^{N}$ to be $\left\{ p^{(i)}-q\right\} _{i=2k+1}^{n}$, $\left\{ v_{i'}^{s}\vphantom{v_{i}^{l}}\right\} _{i'=1}^{N'}$ to be $\left\{ p^{(i)}-q\right\} _{i=1}^{k}$, $M$ to be $m$, $\left\{u_{ij}'\right\} _{j\in[M]}$ to be $\left\{u_{jl}\right\} _{j\in[m]}$ for all $i \in [N]$ and $k'$ to be $n_{0}-k$, we obtain an upper bound for the probability of there being a subset of $\left\{ p^{(i)}\right\} _{i=2k+1}^{n}$ of size $n_{0}-k$ such that for all points $p$ in the subset, $\max_{j}\left\{ \left|\langle p-q,u_{jl}\rangle\right|\right\} \leq\left\Vert p'-q\right\Vert _{2}$ for some $p' \in \left\{ p^{(i)}-q\right\} _{i=1}^{k}$. In other words, this is the probability of there being $n_{0}-k$ points that are not the $2k$-nearest neighbours whose maximum projected distances are no greater than the distance from some $k$-nearest neighbours to the query, which is at most $\frac{1}{n_{0}-k}\sum_{i=2k+1}^{n}\left(1-\frac{2}{\pi}\cos^{-1}\left(\left\Vert p^{(k)}-q\right\Vert _{2}/\left\Vert p^{(i)}-q\right\Vert _{2}\right)\right)^{m}$. 

Since the event that $\max_{j}\left\{ \left|\langle p-q,u_{jl}\rangle\right|\right\} \leq\max_{j}\left\{ \left|\langle p'-q,u_{jl}\rangle\right|\right\}$ is contained in the event that $\max_{j}\left\{ \left|\langle p-q,u_{jl}\rangle\right|\right\} \leq\left\Vert p'-q\right\Vert _{2}$ for any $p,p'$, this is also an upper bound for the probability of there being $n_{0}-k$ points that are not the $2k$-nearest neighbours whose maximum projected distances do not exceed those of some of the $k$-nearest neighbours, which by definition is the probability that there are $n_{0} - k$ silly extraneous points. Since this probability is no less than the probability that there are $n_{0}$ extraneous points, the upper bound also applies to this probability. 
\end{proof}

\begin{customlem}{4}
Consider point projections in a composite index that consists of $m$ simple indices in the order they are visited. The probability that $n_{0}$ point projections that are not of the true $k$-nearest neighbours are visited before all true $k$-nearest neighbours have been retrieved is at most $\frac{m}{n_{0}-mk}\sum_{i=2k+1}^{n}\left(1-\frac{2}{\pi}\cos^{-1}\left(\left\Vert p^{(k)}-q\right\Vert _{2}/\left\Vert p^{(i)}-q\right\Vert _{2}\right)\right)$. 
\end{customlem}

\begin{proof}
Projections of points that are not the true $k$-nearest neighbours but are visited before the $k$-nearest neighbours have all been retrieved will be referred to as \emph{extraneous projections} and are divided into two categories: \emph{reasonable} and \emph{silly}. An extraneous projection is reasonable if it is of one of the $2k$-nearest neighbours, and is silly otherwise. For there to be $n_{0}$ extraneous projections, there must be $n_{0} - mk$ silly extraneous projections, since there could be at most $mk$ reasonable extraneous projections. Therefore, the probability that there are $n_{0}$ extraneous projections is upper bounded by the probability that there are $n_{0} - mk$ silly extraneous projections. 

Since point projections are visited in the order of increasing projected distance to the query, each extraneous silly projection must be closer to the query projection than the maximum projection of some $k$-nearest neighbour. 

By Theorem~\ref{thm:multi_order_invert}, if we take $\left\{ v_{i}^{l}\right\} _{i=1}^{N}$ to be $\left\{ p^{(2k+\lfloor (i-1)/m\rfloor +1)}-q\right\} _{i=1}^{m(n-2k)}$, $\left\{ v_{i'}^{s}\vphantom{v_{i}^{l}}\right\} _{i'=1}^{N'}$ to be $\left\{ p^{(\lfloor(i-1)/m\rfloor +1)}-q\right\} _{i=1}^{mk}$, $M $ to be $1$, $\left\{u_{i1}'\right\}_{i=1}^{N}$ to be $\left\{u_{(i \mod m),l}\right\} _{i=1}^{m(n-2k)}$ and $k'$ to be $n_{0}-mk$, we obtain an upper bound for the probability of there being $n_{0}-mk$ point projections that are not of the $2k$-nearest neighbours whose distances to their respective query projections are no greater than the true distance between the query and some $k$-nearest neighbour, which is $\frac{1}{n_{0}-mk}\sum_{i=2k+1}^{n}m\left(1-\frac{2}{\pi}\cos^{-1}\left(\frac{\left\Vert p^{(k)}-q\right\Vert _{2}}{\left\Vert p^{(i)}-q\right\Vert _{2}}\right)\right)$. 

Because maximum projected distances are no more than true distances, this is also an upper bound for the probability of there being $n_{0} - mk$ silly extraneous projections. Since this probability is no less than the probability that there are $n_{0}$ extraneous projections, the upper bound also applies to this probability. 
\end{proof}

\begin{customlem}{5}
On a dataset with global relative sparsity $(k,\gamma)$, the quantity $\sum_{i=2k+1}^{n}\left(1-\frac{2}{\pi}\cos^{-1}\left(\left\Vert p^{(k)}-q\right\Vert _{2}/\left\Vert p^{(i)}-q\right\Vert _{2}\right)\right)^{m}$ is at most $O\left(k\max(\log(n/k),(n/k)^{1-m\log_{2}\gamma})\right)$. 
\end{customlem}

\begin{proof}
By definition of global relative sparsity, for all $i\geq2k+1$, $\left\Vert p^{(i)}-q\right\Vert _{2}>\gamma\left\Vert p^{(k)}-q\right\Vert _{2}$. A recursive application shows that for all $i\geq2^{i'}k+1$, $\left\Vert p^{(i)}-q\right\Vert _{2}>\gamma^{i'}\left\Vert p^{(k)}-q\right\Vert _{2}$. 

Applying the fact that $1-(2 / \pi)\cos^{-1}\left(x\right)\leq x \; \forall x \in [0,1]$ and the above observation yields:

\vspace{-10pt}
\footnotesize
\begin{align*}
 & \sum_{i=2k+1}^{n}\left(1-\frac{2}{\pi}\cos^{-1}\left(\frac{\left\Vert p^{(k)}-q\right\Vert _{2}}{\left\Vert \vphantom{p^{(k)}} p^{(i)}-q\right\Vert _{2}}\right)\right)^{m}
\end{align*}
\begin{align*}
 \leq \;& \sum_{i=2k+1}^{n}\left(\frac{\left\Vert p^{(k)}-q\right\Vert _{2}}{\left\Vert \vphantom{p^{(k)}} p^{(i)}-q\right\Vert _{2}}\right)^{m} \\
 < \;& \sum_{i'=1}^{\lceil\log_{2}(n/k)\rceil-1}2^{i'}k\gamma^{-i'm}
\end{align*}
\normalsize

If $\gamma \geq \sqrt[m]{2}$, this quantity is at most $k\log_{2}\left(n/k\right)$. On the other hand, if $1 \leq\gamma < \sqrt[m]{2}$, this quantity can be simplified to:
\vspace{-10pt}
\footnotesize
\begin{align*}
 & k\left(\frac{2}{\gamma^{m}}\right)\left(\left(\frac{2}{\gamma^{m}}\right)^{\lceil\log_{2}(n/k)\rceil-1}-1\right)/\left(\frac{2}{\gamma^{m}}-1\right) \\
 = \;& O\left(k\left(\frac{2}{\gamma^{m}}\right)^{\lceil\log_{2}(n/k)\rceil-1}\right) \\
 = \;& O\left(k\left(\frac{n}{k}\right)^{1-m\log_{2}\gamma}\right)
\end{align*}
\normalsize
Therefore, $\sum_{i=2k+1}^{n}\left(\left\Vert p^{(k)}-q\right\Vert _{2}/\left\Vert p^{(i)}-q\right\Vert _{2}\right)^{m} \leq O\left(k\max(\log(n/k),(n/k)^{1-m\log_{2}\gamma})\right)$. 
\end{proof}

\begin{customlem}{6}
For a dataset with global relative sparsity $(k,\gamma)$ and a given composite index consisting of $m$ simple indices, there is some $k_{0} \in\Omega(k\max(\log(n/k),(n/k)^{1-m\log_{2}\gamma}))$ such that the probability that the candidate points retrieved from the composite index do not include some of the true $k$-nearest neighbours is at most some constant $\alpha_{0}<1$.
\end{customlem}

\begin{proof}
We will refer to the true $k$-nearest neighbours that are among first $k_{0}$ points retrieved from the composite index as \emph{true positives} and those that are not as \emph{false negatives}. Additionally, we will refer to points that are not true $k$-nearest neighbours but are among the first $k_{0}$ points retrieved as \emph{false positives}. 

When not all the true $k$-nearest neighbours are among the first $k_{0}$ candidate points, there must be at least one false negative and so there can be at most $k-1$ true positives. Consequently, there must be at least $k_{0}-(k-1)$ false positives. To find an upper bound on the probability of the existence of $k_{0}-(k-1)$ false positives in terms of global relative sparsity, we apply Lemma \ref{lem:prob_num_extraneous_retrieved_points} with $n_{0}$ set to $k_{0}-(k-1)$, followed by Lemma~\ref{lem:sum_dist_ratios}. We conclude that this probability is at most $\frac{1}{k_{0}-2k+1}O\left(k\max(\log(n/k),(n/k)^{1-m\log_{2}\gamma})\right)$. Because the event that not all the true $k$-nearest neighbours are among the first $k_{0}$ candidate points is contained in the event that there are $k_{0}-(k-1)$ false positives, the former is upper bounded by the same quantity. So, we can choose some $k_{0} \in \Omega(k\max(\log (n/k),(n/k)^{1-m\log_{2}\gamma}))$ to make it strictly less than 1. 
\end{proof}

\begin{customlem}{7}
For a dataset with global relative sparsity $(k,\gamma)$ and a given composite index consisting of $m$ simple indices, there is some $k_{1} \in\Omega(mk\max(\log(n/k),(n/k)^{1-\log_{2}\gamma}))$ such that the probability that the candidate points retrieved from the composite index do not include some of the true $k$-nearest neighbours is at most some constant $\alpha_{1}<1$.
\end{customlem}

\begin{proof}
We will refer to the projections of true $k$-nearest neighbours that are among first $k_{1}$ visited point projections as \emph{true positives} and those that are not as \emph{false negatives}. Additionally, we will refer to projections of points that are not of the true $k$-nearest neighbours but are among the first $k_{1}$ visited point projections as \emph{false positives}. 

When a $k$-nearest neighbour is not among the candidate points that have been retrieved, some of its projections must not be among the first $k_{1}$ visited point projections. So, there must be at least one false negative, implying that there can be at most $mk-1$ true positives. Consequently, there must be at least $k_{1}-(mk-1)$ false positives. To find an upper bound on the probability of the existence of $k_{1}-(mk-1)$ false positives in terms of global relative sparsity, we apply Lemma \ref{lem:prob_num_extraneous_visited_points} with $n_{0}$ set to $k_{1}-(mk-1)$, followed by Lemma~\ref{lem:sum_dist_ratios}. We conclude that this probability is at most $\frac{m}{k_{1}-2mk+1}O\left(k\max(\log(n/k),(n/k)^{1-\log_{2}\gamma})\right)$. Because the event that some true $k$-nearest neighbour is missing from the candidate points is contained in the event that there are $k_{1}-(mk-1)$ false positives, the former is upper bounded by the same quantity. So, we can choose some $k_{1} \in \Omega(mk\max(\log (n/k),(n/k)^{1-\log_{2}\gamma}))$ to make it strictly less than 1. 
\end{proof}

\begin{customthm}{2}
For a dataset with global relative sparsity $(k,\gamma)$, for any $\epsilon > 0$, there is some $L$, $k_{0} \in \Omega(k\max(\log(n/k),(n/k)^{1-m\log_{2}\gamma}))$ and $k_{1} \in \Omega(mk\max(\log(n/k),(n/k)^{1-\log_{2}\gamma}))$ such that the algorithm returns the correct set of $k$-nearest neighbours with probability of at least $1 - \epsilon$. 
\end{customthm}

\begin{proof}
For a given composite index, by Lemma~\ref{lem:single_k0_failure_prob}, there is some $k_{0} \in \Omega(k\max(\log(n/k),(n/k)^{1-m\log_{2}\gamma}))$ such that the probability that some of the true $k$-nearest neighbours are missed is at most some constant $\alpha_{0}<1$. Likewise, by Lemma~\ref{lem:single_k1_failure_prob}, there is some $k_{1} \in \Omega(mk\max(\log(n/k),(n/k)^{1-\log_{2}\gamma}))$ such that this probability is at most some constant $\alpha_{1}<1$. By choosing such $k_{0}$ and $k_{1}$, this probability is therefore at most $\min\{\alpha_{0},\alpha_{1}\}<1$. For the algorithm to fail, all composite indices must miss some $k$-nearest neighbours. Since each composite index is constructed independently, the algorithm fails with probability of at most $\left(\min\{\alpha_{0},\alpha_{1}\}\right)^{L}$, and so must succeed with probability of at least $1-\left(\min\{\alpha_{0},\alpha_{1}\}\right)^{L}$. Since $\min\{\alpha_{0},\alpha_{1}\} < 1$, there is some $L$ that makes $1-\left(\min\{\alpha_{0},\alpha_{1}\}\right)^{L} \geq 1 - \epsilon$. 
\end{proof}

\begin{customthm}{3}
For a given number of simple indices $m$, the algorithm takes $O\left(dk\max(\log(n/k),(n/k)^{1-m/d'})+\right.$ $\left.mk\log m\left(\max(\log(n/k),(n/k)^{1-1/d'})\right)\right)$ time to retrieve the $k$-nearest neighbours at query time, where $d'$ denotes the intrinsic dimensionality. 
\end{customthm}

\begin{proof}
Computing projections of the query point along all $u_{jl}$'s takes $O(dm)$ time, since $L$ is a constant. Searching in the binary search trees/skip lists $T_{jl}$'s takes $O(m \log n)$ time. The total number of point projections that are visited is at most $\Theta(mk\max(\log(n/k),(n/k)^{1-\log_{2}\gamma}))$. Because determining the next point to visit requires popping and pushing a priority queue, which takes $O(\log m)$ time, the total time spent on visiting points is $O(mk \log m\max(\log(n/k),(n/k)^{1-\log_{2}\gamma}))$. The total number of candidate points retrieved is at most $\Theta(k\max(\log (n/k),(n/k)^{1-m\log_{2}\gamma}))$. Because true distances are computed for every candidate point, the total time spent on distance computation is $O(dk\max(\log (n/k),(n/k)^{1-m\log_{2}\gamma}))$. We can find the $k$ closest points to the query among the candidate points using a selection algorithm like quickselect, which takes $O(k\max(\log (n/k),(n/k)^{1-m\log_{2}\gamma}))$ time on average. Since the time for visiting points and for computing distances dominates, the entire algorithm takes $O(dk\max(\log (n/k),(n/k)^{1-m\log_{2}\gamma}) + mk \log m\max(\log(n/k),(n/k)^{1-\log_{2}\gamma}))$ time. Substituting $1/d'$ for $\log_{2}\gamma$ yields the desired expression. 
\end{proof}

\begin{figure*}[t]
    \centering
    \subfloat[]{
        \includegraphics[width=0.5\textwidth]{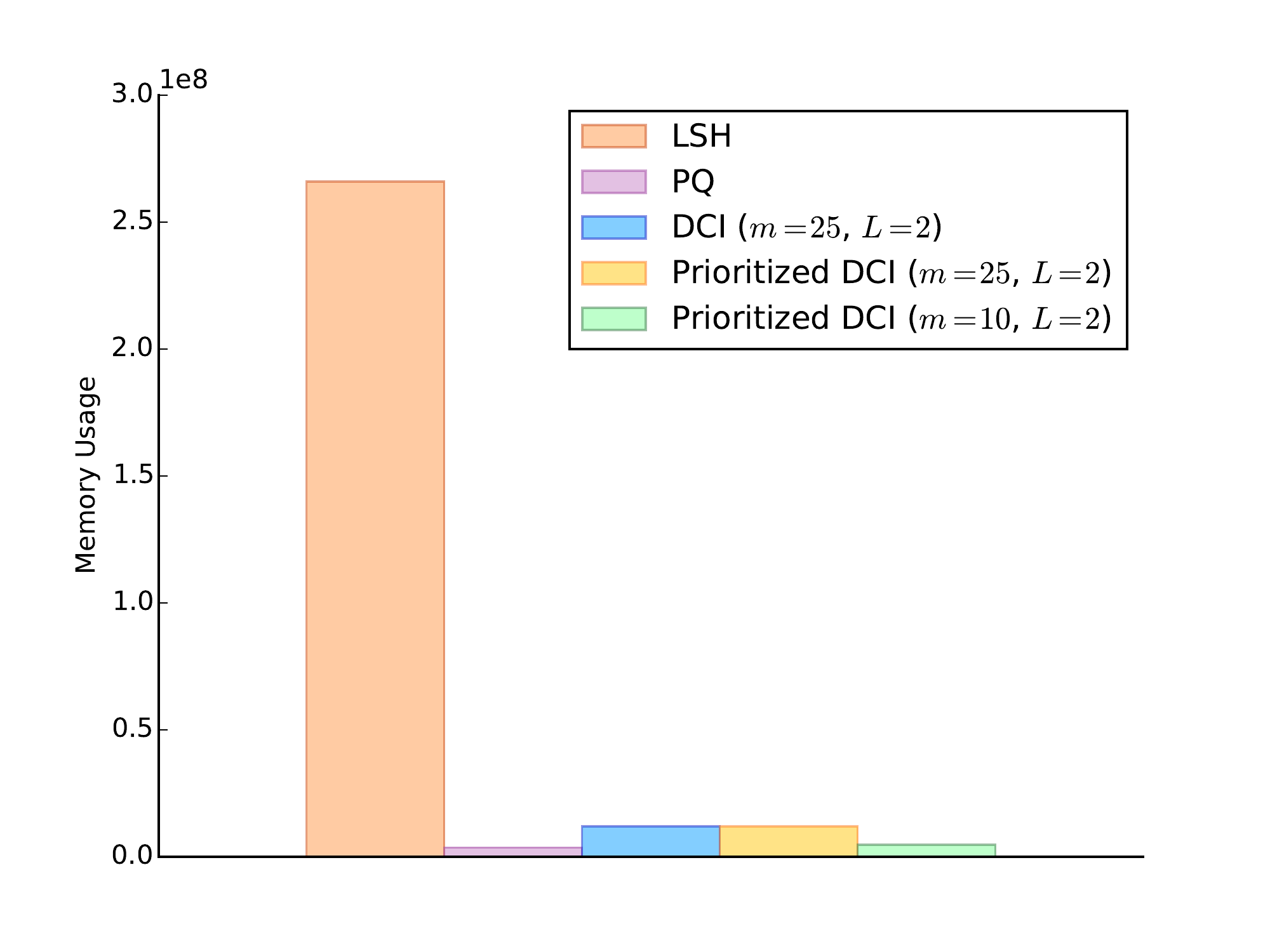}
        \label{fig:mem_comp_cifar}
    }
    \subfloat[]{
        \includegraphics[width=0.5\textwidth]{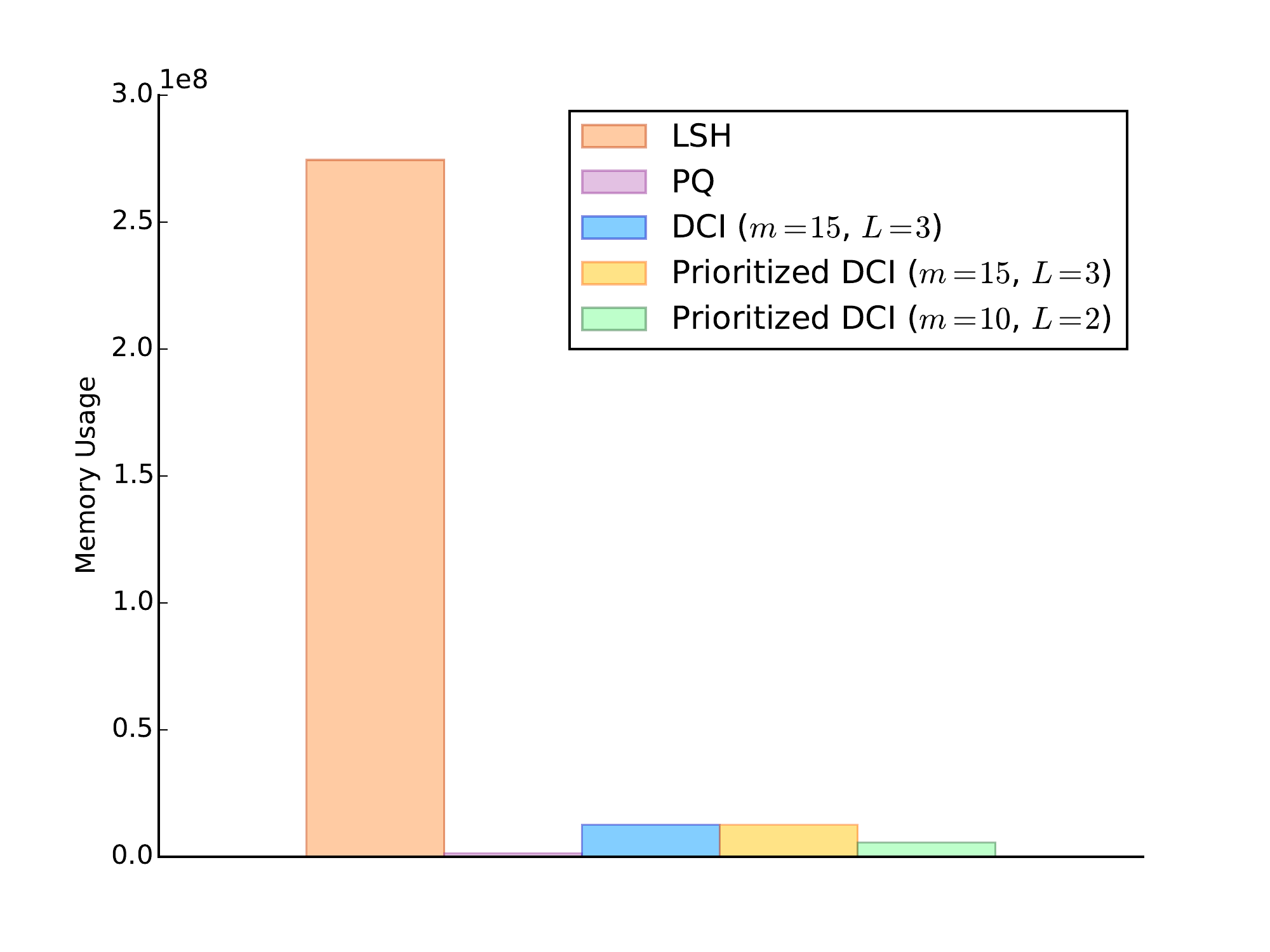}
        \label{fig:mem_comp_mnist}
    }
    \caption{\label{fig:mem_comp}Memory usage of different algorithms on (a) CIFAR-100 and (b) MNIST. Lower values are better. }
\end{figure*}

\begin{customthm}{4}
For a given number of simple indices $m$, the algorithm takes $O(m(dn+n\log n))$ time to preprocess the data points in $D$ at construction time. 
\end{customthm}

\begin{proof}
Computing projections of all $n$ points along all $u_{jl}$'s takes $O(dmn)$ time, since $L$ is a constant. Inserting all $n$ points into $mL$ self-balancing binary search trees/skip lists takes $O(mn\log n)$ time. 
\end{proof}

\begin{customthm}{5}
The algorithm requires $O(m(d+\log n))$ time to insert a new data point and $O(m \log n)$ time to delete a data point. 
\end{customthm}

\begin{proof}
In order to insert a data point, we need to compute its projection along all $u_{jl}$'s and insert it into each binary search tree or skip list. Computing the projections takes $O(md)$ time and inserting them into the corresponding self-balancing binary search trees or skip lists takes $O(m \log n)$ time. In order to delete a data point, we simply remove its projections from each of the binary search trees or skip lists, which takes $O(m \log n)$ time. 
\end{proof}

\begin{customthm}{6}
The algorithm requires $O(mn)$ space in addition to the space used to store the data. 
\end{customthm}

\begin{proof}
The only additional information that needs to be stored are the $mL$ binary search trees or skip lists. Since $n$ entries are stored in each binary search tree/skip list, the total additional space required is $O(mn)$. 
\end{proof}

\section{Experiments}

Figure~\ref{fig:mem_comp} shows the memory usage of different algorithms on CIFAR-100 and MNIST.

\end{document}